\documentclass{article}

\usepackage[preprint,nonatbib]{neurips_2020}

\usepackage[utf8]{inputenc}
\usepackage[T1]{fontenc}
\usepackage[hidelinks]{hyperref}
\usepackage{url}
\usepackage{booktabs}
\usepackage{amsfonts}
\usepackage{nicefrac}
\usepackage{microtype}

\usepackage{algorithm}
\usepackage{algorithmic}
\usepackage{caption}

\usepackage{amsmath}
\usepackage{amssymb}
\usepackage{amsthm}
\usepackage{bbm}
\usepackage{bm}
\usepackage{boxedminipage}
\usepackage{float}
\usepackage{ifthen}
\usepackage{listings}
\usepackage{mathtools}
\usepackage{multirow}
\usepackage{todonotes}
\usepackage{xparse}
\usepackage{xspace}

\usepackage{textcase}
\usepackage{soul}
\usepackage{indentfirst}

\usepackage{tikz}

\newtheorem*{rep@theorem}{\rep@title}
\newcommand{\newreptheorem}[2]{%
\newenvironment{rep#1}[1]{%
 \def\rep@title{\theoremref{##1} Restated}%
 \begin{rep@theorem}}%
 {\end{rep@theorem}}}
\makeatother
\makeatletter
\newtheorem*{rep@lemma}{\rep@title}
\newcommand{\newreplemma}[2]{%
\newenvironment{rep#1}[1]{%
 \def\rep@title{\lemmaref{##1} Restated}%
 \begin{rep@lemma}}%
 {\end{rep@lemma}}}
\makeatother
\makeatletter
\newtheorem*{rep@claim}{\rep@title}
\newcommand{\newrepclaim}[2]{%
\newenvironment{rep#1}[1]{%
 \def\rep@title{\claimref{##1} Restated}%
 \begin{rep@claim}}%
 {\end{rep@claim}}}
\makeatother
\makeatletter
\newtheorem*{rep@problem}{\rep@title}
\newcommand{\newrepproblem}[2]{%
\newenvironment{rep#1}[1]{%
 \def\rep@title{\problemref{##1} Restated}%
 \begin{rep@problem}}%
 {\end{rep@problem}}}
\makeatother
\newtheorem{theorem}{Theorem}
\newreptheorem{theorem}{Theorem}
\newtheorem{importedtheorem}{Imported Theorem}
\newtheorem{problem}{Problem}
\newrepproblem{problem}{Problem}

\newtheorem{definition}{Definition}
\newtheorem{lemma}{Lemma}
\newreplemma{lemma}{Lemma}
\newtheorem{claim}{Claim}
\newrepclaim{claim}{Claim}
\newtheorem{corollary}[theorem]{Corollary}

\newcommand\numberthis{\addtocounter{equation}{1}\tag{\theequation}}

\newcommand{\namedref}[2]{\texorpdfstring{\hyperref[#2]{#1~\ref*{#2}}}{#1~\ref*{#2}}\xspace}
\newcommand{\lemmaref}[1]{\namedref{Lemma}{lem:#1}}
\newcommand{\theoremref}[1]{\namedref{Theorem}{thm:#1}}
\newcommand{\claimref}[1]{\namedref{Claim}{clm:#1}}
\newcommand{\corolref}[1]{\namedref{Corollary}{corol:#1}}

\newcommand{\equationref}[1]{\namedref{Equation}{eq:#1}}
\newcommand{\inequalityref}[1]{\namedref{Inequality}{ineq:#1}}
\newcommand{\defref}[1]{\namedref{Definition}{def:#1}}

\newcommand{\problemref}[1]{\namedref{Problem}{prob:#1}}
\newcommand{\algorithmref}[1]{\namedref{Algorithm}{alg:#1}}
\newcommand{\importedtheoremref}[1]{\namedref{Imported Theorem}{impthm:#1}}

\newcommand{\sectionref}[1]{\namedref{Section}{sec:#1}}
\newcommand{\appendixref}[1]{\namedref{Appendix}{app:#1}}

\newcommand{\naive}{\text{na\"ive}\xspace}

\newcommand{\floor}[1]{\ensuremath{\left\lfloor{#1}\right\rfloor}\xspace}
\newcommand{\abs}[1]{\ensuremath{\vert{#1}\vert}\xspace}

\newcommand{\eps}[0]{\ensuremath{\varepsilon}}
\let\epsilon\eps

\newcommand{\cA}{\ensuremath{{\mathcal A}}\xspace}
\newcommand{\cB}{\ensuremath{{\mathcal B}}\xspace}
\newcommand{\cC}{\ensuremath{{\mathcal C}}\xspace}

\newcommand{\cF}{\ensuremath{{\mathcal F}}\xspace}
\newcommand{\cG}{\ensuremath{{\mathcal G}}\xspace}
\newcommand{\cH}{\ensuremath{{\mathcal H}}\xspace}
\newcommand{\cI}{\ensuremath{{\mathcal I}}\xspace}
\newcommand{\cK}{\ensuremath{{\mathcal K}}\xspace}
\newcommand{\cL}{\ensuremath{{\mathcal L}}\xspace}

\newcommand{\cS}{\ensuremath{{\mathcal S}}\xspace}

\newcommand{\cU}{\ensuremath{{\mathcal U}}\xspace}

\newcommand{\by}{\ensuremath{{\mathbf y}}\xspace}

\newcommand{\bA}{\ensuremath{{\mathbf A}}\xspace}

\newcommand{\bbC}{\ensuremath{{\mathbb C}}\xspace}

\newcommand{\bbE}{\ensuremath{{\mathbb E}}\xspace}

\newcommand{\bbR}{\ensuremath{{\mathbb R}}\xspace}

\newcommand{\defeq}[0]{\ensuremath{\;{\vcentcolon=}\;}\xspace}

\newcommand{\E}[0]{\mathop{\bbE}\xspace}

\DeclareMathOperator{\trace}{tr}

\DeclareMathOperator{\sinc}{sinc}

\newcommand{\mat}[1]{\boldsymbol{#1}}
\renewcommand{\vec}[1]{\boldsymbol{\mathrm{#1}}}
\newcommand{\vecalt}[1]{\boldsymbol{#1}}

\newcommand{\normof}[1]{\|#1\|}

\newcommand{\bmat}[1]{\begin{bmatrix} #1 \end{bmatrix}}

\newcommand{\sbmat}[1]{\left[\begin{smallmatrix} #1 \end{smallmatrix}\right]}

\newcommand{\RR}{\mathbb{R}}

\newcommand{\eye}{\mat{I}\xspace}
\newcommand{\mA}{\ensuremath{\mat{A}}\xspace}
\newcommand{\mB}{\ensuremath{\mat{B}}\xspace}

\newcommand{\mF}{\ensuremath{\mat{F}}\xspace}

\newcommand{\mI}{\ensuremath{\mat{I}}\xspace}

\newcommand{\mK}{\ensuremath{\mat{K}}\xspace}

\newcommand{\mS}{\ensuremath{\mat{S}}\xspace}

\newcommand{\vb}{\ensuremath{\vec{b}}\xspace}
\newcommand{\vc}{\ensuremath{\vec{c}}\xspace}

\newcommand{\vk}{\ensuremath{\vec{k}}\xspace}

\newcommand{\vv}{\ensuremath{\vec{v}}\xspace}
\newcommand{\vw}{\ensuremath{\vec{w}}\xspace}
\newcommand{\vx}{\ensuremath{\vec{x}}\xspace}
\newcommand{\vy}{\ensuremath{\vec{y}}\xspace}

\newcommand{\vsigma}{\ensuremath{\vecalt{\sigma}}\xspace}
\newcommand{\valpha}{\ensuremath{\vecalt{\alpha}}\xspace}

\sodef\allcapsspacing{\upshape}{0.15em}{0.65em}{0.6em}%

\colorlet{todo_background_normal}{white}
\definecolor{todo_background_dark}{RGB}{39,40,34}

\definecolor{advice_text}{RGB}{78, 12, 123}
\colorlet{advice_background}{todo_background_normal}

\definecolor{incomplete_text}{RGB}{204, 64, 84}
\colorlet{incomplete_background}{todo_background_normal}

\newcounter{question}
\setcounter{question}{0}

\DeclareMathOperator*{\argmin}{argmin}

\usepackage{enumitem}

\title{The Statistical Cost of Robust Kernel Hyperparameter Tuning}

\author{%
  Raphael A.~Meyer \\
  Tandon School of Engineering\\
  New York University\\
  \texttt{ram900@nyu.edu} \\
  \And
  Christopher Musco \\
  Tandon School of Engineering\\
  New York University\\
  \texttt{cmusco@nyu.edu} \\
}

\begin{document}

\maketitle

\begin{abstract}
This paper studies the statistical complexity of kernel hyperparameter tuning in the setting of active regression under adversarial noise.
We consider the problem of finding the best interpolant from a class of kernels with unknown hyperparameters, assuming only that the noise is square-integrable.
We provide finite-sample guarantees for the problem, characterizing how increasing the complexity of the kernel class increases the complexity of learning kernel hyperparameters.
For common kernel classes (e.g. squared-exponential kernels with unknown lengthscale), our results show that hyperparameter optimization increases sample complexity by just a logarithmic factor, in comparison to the setting where optimal parameters are known in advance.
Our result is based on a subsampling guarantee for linear regression under multiple design matrices, combined with an $\epsilon$-net argument for discretizing kernel parameterizations.  
\end{abstract}

\section{Introduction}

In machine learning, Kernel Ridge Regression (KRR) is central to modern time series analysis and nonparametric regression.
For time series, Gaussian Processes model the covariance of a stochastic process using a kernel matrix, and interpolate the underlying signal with KRR \cite{DBLP:books/lib/RasmussenW06}.
In nonparametric regression, kernels define a local-averaging scheme, and KRR provides a smooth interpolation for the function \cite{DBLP:journals/technometrics/Fotopoulos07,DBLP:books/daglib/0035708}.
Experimentally, it is known that the Kernel Ridge Regression estimator generalizes and interpolates well over continuous domains \cite{wilson2013gaussian,avron2019universal}.

However, it is also known that \emph{kernel regression only performs well when kernel hyperparameters are chosen well} \cite{DBLP:conf/icml/WilsonN15,benton2019function}.
This observation has lead to significant interest in algorithms that try to find the best kernel parameters in a large search space \cite{wang2019million,lanckriet2004learning}.
Additionally, the existing research generally assumes that observation noise is independent, unbiased, and random \cite{DBLP:books/lib/RasmussenW06,DBLP:books/daglib/0034861,DBLP:conf/icml/AvronKMMVZ17,DBLP:journals/technometrics/Fotopoulos07}.
The goal of this paper is to understand the \emph{statistical cost} of this sort of hyperparameter optimization when we can have \emph{worst-case observation noise}.
How many data samples are needed to avoid over-fitting when searching over such a large class of models?

We formalize this problem in an adversarial noise setting that originated in literature on function approximation \cite{ChenKanePrice:2016,ChenPrice:2019a,CohenMigliorati:2017}.
By Bochner's theorem, every stationary (shift invariant) kernel function $k_\mu$ can be written \(k_{\mu}(\Delta) = \int_\bbR e^{- 2 \pi i \xi \Delta} \mu(\xi) d\xi\) for some probability density function \(\mu\).
\cite{avron2019universal} introduces the following active regression problem for interpolating with a fixed \(k_\mu\):

\begin{problem}
\label{prob:fixed-kernel}
Let \(y(t)\) be a signal we wish to interpolate.
 Let \(z(t)\) be an adversarial noise signal.
Fix regularization parameter \(\eps > 0\) and
observe \(y(t) + z(t)\) at any chosen times \(t_1,\ldots,t_n\).
How large does $n$ need to be so that an interpolant \(\tilde y\) constructed from our observations satisfies:
\[
	\normof{\tilde y - y}_T^2 \leq O(1) \cdot \left(\normof{z}_T^2 + \eps \cdot \textup{Energy}_\mu(y)\right)
\]
\end{problem}
Here \(\normof{x}_T^2 \defeq \int_0^T \abs{x(t)}^2 \frac1T dt\) is the natural $\ell_2$ norm on $[0,T]$. 
\(\text{Energy}_\mu(y)\) is a natural measure of the cost of representing the ground truth signal \(y\) with the kernel \(k_\mu\), formally defined in \sectionref{technical-overview}.
Intuitively, if the kernel \(k_\mu\) cannot represent \(y\) easily, then the associated term \(\text{Energy}_{\mu}(y)\) is large, and hence the interpolation error may be large.

\problemref{fixed-kernel} is a robust, active, nonparametric learning problem.
It is nonparametric in the sense that a kernel is being used to interpolate the signal \(y\).
It is robust in the sense that the noise function \(z(t)\) is arbitrary (for instance, we do \emph{not} assume that \(z(t)\) is a zero-mean stochastic process).
It is active in the sense that the user chooses the time points \(t_1,\ldots,t_n\).

\cite{avron2019universal} shows that if we let the number of observations \(n\) exceed a natural \textit{Statistical Dimension} parameter which is a function of the kernel \(k_\mu\) and regularization parameter \(\eps\) (see \sectionref{statistical-dimension} for a formal definition), then KRR solves \problemref{fixed-kernel}.
Moreover, for many common kernels (square exponential, sinc, Lorentzian, etc.) this number of samples is necessary in the worst-case.

Since the observation noise $z$ is adversarial, a linear dependence on $\normof{z}_T^2$ is inevitable.
On the other hand, the $\textup{Energy}_\mu(y)$ term can be reduced by decreasing \eps, but this increases the statistical dimension of the problem, necessitating more samples.
Alternatively, we can decrease the energy term substantially by simply choosing a different kernel.
This is the problem of kernel hyperparameter tuning:
\begin{problem}
\label{prob:interpolate}
Let \(y(t)\) be a signal we wish to interpolate.
Let \(z(t)\) be an adversarial noise signal.
Let \cU be a (possibly infinite) set of kernel PDFs.
Fix regularization parameter \(\eps > 0\) and observe \(y(t)+z(t)\) at any chosen times \(t_1,\ldots,t_n\).
How large does \(n\) need to be so that we can select a PDF \(\tilde\mu\in\cU\) (correspondingly, a shift-invariant kernel function $k_{\tilde{\mu}}$) and construct a KRR interpolant \(\tilde y\) from our observations such that:
\[
	\normof{\tilde y - y}_T^2 \leq O(1) \cdot \left(\normof{z}_T^2 + \eps \cdot \min_{\mu\in\cU} ~ \textup{Energy}_{\mu}(y)\right)
\]
\end{problem}
We should think of \cU as containing all PDFs corresponding to kernels in a structured class: for examples, all squared exponential kernels with unknown lengthscale.
To solve \problemref{interpolate}, we must find hyperparameters that are competitive with the best possible kernel in \cU.
This is still a robust, active, nonparametric learning problem, but is now generalized to consider hyperparameters.
At a high level, our main result is to prove that the number of time samples required to solve \problemref{interpolate} is not much larger than the number of samples required to solve \problemref{fixed-kernel}.

\subsection{Prior work}

There is substantial prior work on hyperparameter tuning between the Learning Theory, Time Series, and Signal Processing literatures.
In the Learning Theory community, the problem of ``learning kernels'' is typical, but usually assumes we are given a finite set of fixed kernels and have to learn how to combine the given kernels \cite{cortes2009new,zien2007multiclass,DBLP:conf/icml/MeyerH19}.
There does exist some work that discusses tuning hyperparameters for kernel families, but these works all make iid noise assumptions \cite{DBLP:journals/neco/YingC10}.
There is also work on gradient methods for hyperparameter tuning, but this work generally avoids finite sample complexity bounds \cite{DBLP:books/lib/RasmussenW06,benton2019function}.
In signal processing, kernel hyperparameter tuning generalizes the well studied problem of spectrum-blind signal reconstruction  \cite{FengBresler:1996,Bresler:2008,MishaliEldar:2009}. However, prior work in that area again does not provide finite sample complexity bounds.
The core technical results of this paper extend tools from recent work on Randomized Signal Processing.
In particular, papers in this area deal with adversarial observation noise, but either assume we know the kernel function exactly \cite{avron2019universal} or primarily address Fourier sparse function fitting \cite{ChenKanePrice:2016}.

\subsection{Contributions}
Our main contribution is to extend work on Randomized Signal Processing to bound the sample complexity of Problem \ref{prob:interpolate}.
For many cases (i.e. the Squared Exponential Kernel with unknown lengthscale), we prove that the sample complexity of learning both the hyperparameters and the signal \(y(t)\) is only logarithmically larger than the sample complexity of learning \(y(t)\) with known hyperparameters (see \corolref{sm-bounds} with \(q=1\)). In other words, solving \problemref{interpolate} is not much harder than solving \problemref{fixed-kernel} with the hardest single kernel in \cU.
We prove this in two core steps:

\begin{itemize}[leftmargin=*]
	\item
	First, we consider the setting where we want to optimize over a large but finite set of \(Q\) possible kernels.
	To solve \problemref{fixed-kernel}, the problem where we have a fixed kernel, prior work requires the number of samples to depend \emph{linearly} on \(1/\delta\) \cite{avron2019universal}.
	Accordingly, a \naive solution to \problemref{interpolate} that combines existing results with a union bound would require the number of samples to grow linearly with \(Q\). In \sectionref{multiple-priors} we improve this dependence to be logarithmic.
	Our result requires a subsampling guarantee for linear operators that may have infinite dimension.
	When applied to finite matrices, this result corresponds to a guarantee for subsampled linear regression with multiple design matrices.
	\item Next, we show how to use this result to bound the sample complexity of hyperparameter tuning for kernels with an \textit{infinite space of hyperparameters}.
	In particular, \sectionref{discrete-hyperparameters} shows how to discretize the space of hyperparameters, reducing the problem from picking a hyperparameter in a continuous space to picking a hyperparameter from a finite set.
	Then, the result from the first bullet point bounds the actual sample complexity of learning our hyperparameters.
	For demonstration purposes, a full analysis is presented for the commonly used Spectral Mixture (SM) Kernel, but the broad framework generalizes to most other stationary kernels.
\end{itemize}
For a summary sample complexity bound for SM kernels, the reader can skip ahead to Corollary \ref{corol:sm-bounds} in Section \ref{sec:discrete-hyperparameters}. We prove that learning hyperparameters for a $q$-component SM kernel can be done with $\tilde{O}(q^2MT)$ samples from $[0,T]$, if the mixture components each have lengthscale $\leq M$. The linear dependence on $MT$ is near optimal even for $q=1$, as shown in \cite{avron2019universal}. We suspect the dependence on $q^2$ can be improved to linear, although note that $q$ is typically a small constant (e.g., $< 10$) in practice \cite{he2015state}.

\section{Preliminaries}
Let bold capital letters, like \mA and \mB, denote complex-valued matrices.
Let bold lower case letters, like \vx and \vy, denote complex-valued vectors.
\(\normof{\vx}_2\) denotes the \(\ell_2\) norm of \vx. We view infinite-dimensional linear operators as generalization of matrices, and functions as generalizations of vectors, so the notation used with be analogous.
Calligraphic capital letters, like \cA and \cB will represent either linear operators or sets; it will be clear from context.
Lower case non-bold letters, like \(f\) and \(g\), denote complex-valued functions of real numbers.
Typically \(y(t)\) and \(z(t)\) will represent functions in the time domain, while \(g(\xi)\) and \(h(\xi)\) will represent functions in the frequency domain.
We use \(\preceq\) and \(\succeq\) to denote semidefinite order for both matrices and Hermitian operators. 

In general, we use $\cH$ to denote a Hilbert space. $\langle f,g\rangle_{\cH}$ and $\|\cdot\|_\cH$ denote the corresponding inner product and norm. 
For a complex number $x$, we let \(x^*\) denote its complex conjugate. For a matrix or linear operator $\cA$, we let $\cA^*$ denote the Hermitian adjoint.
That is, if \cA maps between Hilbert spaces \(\cH_1\) and \(\cH_2\), then \(\cA^*:\cH_2\rightarrow\cH_1\) satisfies \(\langle f,\cA^*g\rangle_{\cH_1} = \langle\cA f,g\rangle_{\cH_2}\) for any \(f\in \cH_1\), \(g\in \cH_2\). 

\subsection{Shift Invariant Kernels}
This paper is concerned with shift-invariant, positive semidefinite kernel functions on the real line.
By Bochner's theorem, any such kernel is the Fourier transform of a positive measure \cite{DBLP:conf/nips/RahimiR07}, and for all settings we consider, the measure will be a probability measure with finitely bounded probability density function $\mu$.\footnote{
	Throughout this paper, $\mu$ will sometimes denote a scaled PDF that integrates to a constant other than 1.
}
We denote the corresponding kernel function by $k_\mu$: 
\begin{align}
\label{eq:bochners}
k_\mu(t_1-t_2) = \int_{\xi\in \RR} e^{-2\pi i (t_1 - t_2)} \mu(\xi) d\xi .
\end{align}

For example, when \(\mu(\xi) = \frac{1}{\sqrt{2\pi \sigma^2}}e^{-\xi^2/2\sigma^2}\) is the Gaussian density, \(k_\mu(\Delta) = e^{-\Delta^2\sigma^2}\) is a squared exponential kernel, also called a radial basis function (RBF) kernel. When \(\mu(\xi) = 1/2F\) for \(\xi \in [-F,F]\) and 0 elsewhere (a uniform density), \(k_\mu(\Delta) = \sinc(F|\Delta|)\) is a sinc kernel.

We let \(L_2(\mu)\) denote the space of complex-valued square integrable functions with respect to \(\mu\).
$L_2(\mu)$ has inner product \(\langle g,h\rangle_\mu \defeq \int_\bbR g(\xi)^* h(\xi) \mu(\xi) d\xi\) and norm
\(\normof{g}_\mu^2 \defeq \langle g,g\rangle_\mu\). 
We will also refer to \(\normof{g}_\mu^2\) as the power of \(g\) with respect to \(\mu\).
A function $g$ is in \(L_2(\mu)\) if $\normof{g}_\mu < \infty$. 
We let \(L_2(T)\) denote the set of complex-valued square integrable functions on \([0,T]\).
I.e. \(L_2(T)\) has inner product \(\langle x,y\rangle_T\defeq\int_0^T x(t)^*y(t) \frac1T dt\) and norm
\(\normof{x}_T^2 \defeq \langle x,x\rangle_T\).
A function \(x\) is in \(L_2(T)\) if \(\normof{x}_T < \infty\).

\subsection{Statistical Dimension and Universal Sampling}
\label{sec:statistical-dimension}
As discussed, the sample complexity of interpolating a function $y$ on $[0,T]$ with a \emph{fixed} kernel function $k_\mu$ is characterized by the statistical dimension of that kernel. Before formally defining this quantity, we introduce the integral operator $\cK_\mu: L_2(T)\rightarrow L_2(T)$
\begin{align*}
[\cK_\mu x](t)\defeq \int_0^T k_\mu(s-t)x(s) \frac{1}{T}ds, 
\end{align*}
which is defined for any kernel function $k_\mu$ and time range $[0,T]$. Note that $\cK_\mu = \cF_\mu^* \cF_\mu$ where $\cF_\mu$ and $\cF_\mu^*$ are the following Fourier transform and inverse Fourier transform operators:
\begin{align*}
&\cF_\mu: L_2(T) \rightarrow L_2(\mu)
&
&[\cF_\mu x](\xi) \defeq \int_0^T x(t) e^{-2 \pi i \xi t} \frac1T dt
\\
&\cF^*_\mu: L_2(\mu) \rightarrow L_2(T)
&
&[\cF_\mu^* g](t) \defeq \int_\bbR g(\xi) e^{2 \pi i \xi t} \mu(\xi) d\xi
\end{align*}

\begin{definition}[Statistical Dimension]
\label{def:stat_dim_kernel}
For any bounded PDF $\mu$ with corresponding kernel $k_\mu$, time range $[0,T]$, and parameter $\eps > 0$, the statistical dimension $s_{\mu,\eps}$ is defined:
\begin{align*}
s_{\mu,\eps} \defeq \trace\left(\cK_\mu(\cK_\mu + \eps \cI_T)^{-1}\right),
\end{align*}
where $\cI_T$ is the identity operator on $L_2(T)$ and \(\trace\) is the trace of an operator.
\end{definition}
Refer to \cite{avron2019universal} for bounds on the statistical dimension of common kernels. For example, for an RBF kernel with lengthscale $\sigma^2$, $s_{\mu,\eps} \leq O(\sigma^2T \sqrt{\log(1/\eps)} + \log(1/\eps))$. For a sinc kernel with bandlimit $F$, $s_{\mu,\eps} = O(FT+ \log(1/\eps))$.

\cite{avron2019universal} prove that \problemref{fixed-kernel} can be solved with a number of samples depending on the statistical dimension $s_{\mu,\eps}$ as long as active samples are drawn from the following distribution over \([0,T]\):
\begin{definition}[Universal Sampling Distribution\footnote{
	The polynomial factors on \(\alpha\) can be tightened using some recent papers \cite{Erdelyi:2017,ChenPrice:2019}, but this would only tighten constants in \(\int_0^T \tilde\tau_\alpha(t) dt\), and hence only tighten constants in the sample complexity.
}]
\label{def:univ_dist}
For a parameter \(\alpha > 0\), let
\begin{align*}
	\tilde\tau_\alpha(t) \defeq \begin{cases}
		\frac{\alpha}{\min\{t,T-t\}} & t \in [T \frac1{\alpha^6}, T(1-\frac1{\alpha^6})] \\
		\frac{\alpha^6}{T}           & t \in [0, T\frac1{\alpha^6}] \cup [T(1-\frac1{\alpha^6}), T]
	\end{cases}
\end{align*}
Note that \(\int_0^T \tilde\tau_\alpha(t) dt = O(\alpha \log \alpha)\).
\end{definition}
Surprisingly, this distribution works for \emph{any} kernel PDF \(\mu\) and \(\eps > 0\), as long as \(\alpha \geq c s_{\mu,\eps}\) for some universal constant \(c > 0\). Specifically, \cite{avron2019universal} show that \(n = \Omega(s_{\mu,\eps} (\frac1\delta + \log(s_{\mu,\eps})))\) independent samples drawn from \([0,T]\) with probability proportional to \(\tilde\tau_\alpha(t)\) suffice to solve \problemref{fixed-kernel} with probability \((1-\delta)\).
The result relies on proving that \(\tilde\tau_\alpha\) forms an upper bound for the \textit{Ridge Leverage Function} of \(\cF_\mu^*\). Details are discussed in \appendixref{multiple-priors}, specifically \lemmaref{op_concentration} and \lemmaref{pairwise_lev_score_bound}.

\subsection{Spectral Mixture Kernels}

The core goal of this paper is to bound the sample complexity of learning kernel hyperparameters under adversarial noise.
While our techniques can apply to a wide variety of kernel classes, we illustrate their application with the Spectral Mixture (SM) kernel.
In particular, the SM Kernel has garnered interest in the Gaussian Process community for its ability to interpolate and extrapolate periodic structure very well \cite{wilson2013gaussian,DBLP:journals/corr/WilsonGNC13,DBLP:conf/aistats/YangWSS15,he2015state,DBLP:conf/nips/TobarBT15}.
However, hyperparameter tuning is also known to be difficult for SM kernels in practice \cite{DBLP:conf/icml/WilsonN15,benton2019function,DBLP:conf/icml/BuiHHLT16,DBLP:journals/jmlr/HensmanDS17,DBLP:conf/nips/WilsonDLX15}.
The SM Kernel is defined by having a PDF that is a symmetric mixture of Gaussians.

Formally, let \(\mu_{c,\sigma}(\xi)\defeq\frac{1}{\sqrt{2\pi\sigma^2}}e^{-\frac{(\xi-c)^2}{2\sigma^2}}\) denote a Gaussian PDF with mean \(c\) and lengthscale \(\sigma^2\).
Then, let \(\mu_{\vc,\vsigma,\vw}(\xi)\defeq\sum_{j=1}^q w_j \mu_{c_j,\sigma_j}(\xi)\) denote a mixture of \(q\) Gaussians with weights in \vw, means in \vc, and lengthscales in \vsigma.
The SM Kernel considers the special case of the mixture of Gaussians kernel when the PDF is symmetric: \(d\mu_{\vc,\vsigma,\vw}(-\xi)= d\mu_{\vc,\vsigma\vw}(\xi)\), making the kernel function real-valued:
\[
	k_{\vc,\vsigma,\vw}(s-t) =
	\sum_{j=1}^q w_j e^{-2\pi^2 (s-t)^2 \sigma_j^2} \cos(2\pi(s-t) c_j)
\]
All our results are stated for the Mixture of Gaussians kernel, so the SM kernel is handled implicitly.

\section{Technical Overview}
\label{sec:technical-overview}

At a high level, we are given a possibly infinite set of PDFs over frequencies \cU and want to find a specific PDF \(\tilde\mu\in\cU\) such the KRR interpolant using \(\tilde\mu\) is a good interpolant for the ground truth signal \(y(t)\).
We only get to observe \(y(t)\) through adversarially perturbed samples, and we get to pick those samples to lie anywhere in \([0,T]\).
Our main concern is bounding the number of samples needed to identify a near-optimal \(\tilde\mu\) and its associated interpolant \(\tilde y\).
We formally restate \problemref{interpolate} below:

\begin{repproblem}{interpolate}
Let \(y(t)\) be a signal we want to interpolate.
Let \(z(t)\) be an adversarial noise signal.
Let \cU be a (possibly infinite) set of kernel PDFs.
Let \(\hat\cU\subseteq\cU\) be the subset of PDF capable of representing \(y\) exactly\footnote{
	This is a technical nuance to handle the edge-case that \(y(t)\) might not be representable by all of the given PDFs.
	For instance, if \(y\) is a sinusoid with frequency 1, then a bandlimited \(\mu\) supported on frequencies \(2\) through \(4\) is incapable to of representing \(y\) exactly.
}.
That is, \(\hat\cU\defeq\{\mu\in\cU \ | \ \exists h\in L_2(\mu), y = \cF_\mu^* h\}\).
Fix regularization parameter \(\eps > 0\) and number of observations \(n\).
Observe \(y(t)+z(t)\) at any chosen times \(t_1,\ldots,t_n\).
Using any \(\tilde\mu\in\cU\), construct an interpolant \(\tilde y\) from our observations such that
\[
	\normof{y - \tilde y}_T^2 \leq C \cdot \Big(\normof{z}_T^2 + \eps \min_{\substack{\mu\in\hat\cU, \hspace{.1em}y=\cF_\mu^*h}}\normof{h}_\mu^2 \Big)
\]
\end{repproblem}

Note that, for any \(\mu\in\hat\cU\), we have defined \(\text{Energy}_\mu(y)\) to be \(\normof{h}_\mu^2\), where \(y=\cF_\mu^*h\).
That is, the energy of \(y\) under PDF \(\mu\) is the norm of the signal whose Inverse Fourier Transform is \(y\).
Intuitively, if it is difficult (requires a high energy signal) to represent \(y\) in \(L_2(\mu)\), then the energy of \(y\) is large.

To make our statistical approach clear, we start by presenting the exact time-sampling and interpolation schemes used in this paper.
We need two algorithms for our analysis: the first picks \(n\) times samples and builds \(Q\) different weighted kernel matrices (one for each of \(Q\) different given kernels); the second constructs a KRR interpolant for any given weighted kernel matrix.
Note that all kernel matrices are constructed \textit{using the exact same time samples}.
\begin{algorithm}[H]
	\caption{Time Point Sampling}
	\label{alg:time-sampling}
	{\bfseries input}: Kernel functions \(k_{\mu_1},\ldots k_{\mu_Q}\), non-negative function \(p(t)\) on \([0,T]\) with known integral \(P = \int_0^T p(t)dt\), number of samples \(n\).\\
	{\bfseries output}: Times \(t_1,\ldots, t_{n} \in [0,T]\), weights \(v_1, \ldots, v_n\), PSD matrices \(\mK_{\mu_1},\ldots, \mK_{\mu_Q} \in \bbC^{n\times n}\). \phantom{aaaaaaaaaaaaaaaaaaaaaaaaaaaaaaa}
	\vspace{-\baselineskip}
\begin{algorithmic}[1]
	\STATE Independently sample \(t_{1},\ldots, t_{n}\) from \([0,T]\) with probability proportional to \(p(t)\).
	\STATE For \(i\in \{1,\ldots, n\}\) set \(v_i \defeq  \sqrt{\frac{P}{n\cdot T\cdot p(t_i)}}\).
	\STATE For \(q\in \{1,\ldots, Q\}\) and \(i,j\in \{1,\ldots, n\}\) set \([\mK_{\mu_q}]_{i,j} \defeq v_iv_j \cdot k_{\mu_q}(t_i,t_j)\)
	\STATE {\bfseries return} \(t_1,\ldots,t_n\), \(v_1,\ldots,v_n\), \(\mK_{\mu_1},\ldots, \mK_{\mu_Q}\).
\end{algorithmic}
\end{algorithm}
Ultimately, we will take \(p(t)\) to be the universal sampling distribution \(\tau_\alpha(t)\) for some \(\alpha\), but state the sampling method for a general distribution.
For any particular \(\mK_\mu\), we can compute and evaluate the interpolant \(\tilde y\) as follows:
\begin{algorithm}[H]
	\caption{Computing the Interpolant}
	\label{alg:ridge-regression}
	{\bf input}: Time points \(t_1,\ldots, t_{n} \in [0,T]\), weights \(v_1, \ldots, v_n\), PSD matrix \(\mK_\mu \in \bbC^{n\times n}\), regularization parameter \(\eps > 0\).
	\\
	{\bf ouput}: Reconstructed function \(\tilde y\), represented implicitly
\begin{algorithmic}[1]
	\STATE Let \(\bar{\by} \in \bbC^n\) be the vector with \(\bar y_i = v_i \cdot [y(t_i) + z(t_i)]\)
	\STATE {\bf return} \(\tilde{\valpha}:= (\mK_\mu + \eps  \mI)^{-1}\bar{\by}\).
\end{algorithmic}
\end{algorithm}
For any \(t\) in \([0,T]\), we can evaluate \(\tilde{y}(t)\) by computing \(k_\mu(t_i,t)\) for all \(i\in 1,\ldots,n\) and returning \(\tilde y(t) = \sum_{i=1}^n \tilde\alpha_i \cdot k_\mu(t_i,t)\).

In order to start off the analysis, we show that solving a Fourier operator analogue to a Ridge Regression problem guarantees a solution to \problemref{interpolate}.
That is, we reduce the problem of finding a good interpolant to the problem of solving a specialized Ridge Regression problem.
However, this Ridge Regression problem involves an operator on \([0,T]\), and is not in terms of samples observed.
So, we then have to bound how many samples we need to observe for our samples to generalize well to the continuous \([0,T]\) domain, for all PDFs \(\mu\in\cU\).
This is the core technical challenge of this paper.

\begin{claim}
\label{clm:interpolate}
Let \(\tilde\mu\in\cU\) and \(\tilde g\in L_2(\mu)\) be near-optimal solutions to a continuous-time Fourier Fitting problem with ridge regularization:
\begin{align}
\label{ineq:interpolate-requirement}
	\normof{\cF_{\tilde\mu}^*\tilde g - (y+z)}_T^2 + \eps\normof{\tilde g}_{\tilde\mu}^2
	\leq C ~ \min_{\mu \in \cU} \ \min_{g\in L_2(\mu)} \left[\normof{\cF_\mu^* g - (y + z)}_T^2 + \eps\normof{g}_\mu^2\right]
	\nonumber
\end{align}
Let \(\hat\cU\subseteq\cU\) be the subset of PDFs that are able of representing \(y\) exactly.
Then,
\[
	\normof{y - \tilde y}_T^2
	\leq
	2(C+1)\normof{z}_T^2 + 2C\eps \min_{\substack{\mu\in\hat\cU,\hspace{.1em}y=\cF_\mu^*h}} \normof{h}_\mu^2
\]
\end{claim}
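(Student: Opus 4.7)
The plan is to reduce the claim to a straightforward application of the (squared) triangle inequality, combined with two ways of bounding the regularized fitting objective: first from above using the near-optimality assumption, and then from above again by choosing a particularly convenient competitor inside the minimization.

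First I would write $\tilde y = \cF_{\tilde\mu}^*\tilde g$ (this is the interpolant implicitly associated with $\tilde g$) and split the reconstruction error as
\begin{align*}
\normof{y-\tilde y}_T^2
= \normof{\cF_{\tilde\mu}^*\tilde g - (y+z) + z}_T^2
\leq 2\normof{\cF_{\tilde\mu}^*\tilde g - (y+z)}_T^2 + 2\normof{z}_T^2,
\end{align*}
using $\normof{a+b}_T^2\leq 2\normof{a}_T^2 + 2\normof{b}_T^2$. This isolates the fitting error, which is exactly what the hypothesis controls.

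Next I would invoke the hypothesis. Since $\eps\normof{\tilde g}_{\tilde\mu}^2\geq 0$, the fitting error alone is at most the full regularized objective, so
\begin{align*}
\normof{\cF_{\tilde\mu}^*\tilde g - (y+z)}_T^2
\leq C \cdot \min_{\mu\in\cU}\min_{g\in L_2(\mu)}\Big[\normof{\cF_\mu^* g - (y+z)}_T^2 + \eps\normof{g}_\mu^2\Big].
\end{align*}
The final step is to upper bound the right-hand side by restricting the outer minimum to $\hat\cU$ and, for each $\mu\in\hat\cU$, plugging in $g=h$ where $h$ satisfies $y=\cF_\mu^* h$. For that choice the residual $\cF_\mu^* g - (y+z)$ collapses to $-z$, giving
\begin{align*}
\min_{\mu\in\cU}\min_{g\in L_2(\mu)}\Big[\normof{\cF_\mu^* g - (y+z)}_T^2 + \eps\normof{g}_\mu^2\Big]
\leq \normof{z}_T^2 + \eps\min_{\substack{\mu\in\hat\cU,\hspace{.1em}y=\cF_\mu^* h}}\normof{h}_\mu^2.
\end{align*}
Chaining the three displays yields $\normof{y-\tilde y}_T^2 \leq 2(C+1)\normof{z}_T^2 + 2C\eps\min_{\mu\in\hat\cU,\hspace{.1em}y=\cF_\mu^* h}\normof{h}_\mu^2$, as claimed.

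There is no real obstacle here: the argument is essentially a two-line manipulation once one recognizes that (i) dropping the regularizer $\eps\normof{\tilde g}_{\tilde\mu}^2$ is free on the left, and (ii) restricting to $\hat\cU$ and picking an exact representer kills the data-fit term on the right. The only subtlety worth flagging in the write-up is why it is legitimate to use $\hat\cU$ at all — namely that $\hat\cU$ is (by definition) a subset of $\cU$, so restricting the infimum only increases it, and the minimum over $\hat\cU$ is well-defined precisely because $h$ exists with $y=\cF_\mu^*h$ for $\mu\in\hat\cU$.
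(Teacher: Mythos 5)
Your proposal is correct and follows essentially the same route as the paper: restrict the minimum to $\hat\cU$ with the exact representer $h$ to collapse the data-fit term to $\normof{z}_T^2$, invoke the near-optimality hypothesis, and control $\normof{y-\tilde y}_T^2$ by a triangle-inequality argument. The only cosmetic difference is that you apply $\normof{a+b}_T^2 \le 2\normof{a}_T^2 + 2\normof{b}_T^2$ up front, whereas the paper applies the triangle inequality to the unsquared norms and then squares via AM--GM; the two are equivalent and give identical constants.
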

This claim is proven in \appendixref{interpolate}, and directly generalizes the proof of Claim 4 in \cite{avron2019universal}.
Our goal is now to find a \(\tilde\mu\) and \(\tilde g\) that approximately minimize \(\normof{\cF_\mu^* g - (y + z)}_T^2 + \eps\normof{g}_\mu^2\).
If we only had one \(\mu\) to consider, the prior work would be able to solve this with \(O(s_{\mu,\eps}(\frac1\delta + \log(s_{\mu,\eps})))\) many samples.
However, since our goal is to analyze hyperparameter tuning, we consider the cases with both exponentially large and infinitely large \cU.
In these cases, union bounds using prior work would yield exponentially large and unbounded sample complexities, respectively.
In order to avoid this, we form an epsilon-net style argument.
The argument follows in two steps:
\begin{enumerate}[leftmargin=*]
	\item Sampling Time with Finitely Many PDFs:
	Assume that \(\cU\) is finite.
	Let \(s_{\max,\eps}\) be the largest statistical dimension found in \cU.
	Then we prove that \(O(s_{\max,\eps} \log(\frac{s_{\max,\eps}}{\delta} \cdot \abs{\cU}))\) observations suffice to recover a near-optimal \((\tilde\mu,\tilde g)\) pair.
	We emphasize the logarithmic dependence on \(\abs\cU\), since this will allow us to consider exponentially large sets in the next step.
	\item Discretization of Kernel Hyperparameters:
	Assume that \(\cU\) is the set of Gaussian Mixture PDFs with \(q\) Gaussians, taking means in \([-W,W]\), lengthscales in \([m,M]\), and weights in \([0,1]\).
	Then we create a \textit{finite} set of Gaussian Mixture PDFs \(\tilde\cU\) such that the best \((\tilde\mu,\tilde g)\) pair on \(\tilde\cU\) is nearly optimal on all of \cU.
	In particular, we find \(\abs{\tilde\cU} = O((\frac Wm \log (\frac Mm))^q)\).
\end{enumerate}
Our result from the first bullet point allows us to handle the exponentially large set \(\tilde\cU\) created in the second bullet point.
After combining these results and noting that \(s_{\max,\eps} = \tilde O(qMT)\), we find that \(\tilde O(q^2 MT\log (\frac Wm))\) time samples suffice to identify a near-optimal SM kernel's hyperparameters.
The rest of this paper breaks down and explains these two theoretical results in detail.

\section{Sampling Time with Finitely Many PDFs}
\label{sec:multiple-priors}

In this section we assume that the given set of  PDFs \cU is finite, and let \(Q \defeq \abs{\cU}\).
Let \(\tilde y\) and \(\tilde\mu\in\cU\) be the KRR interpolant and associated PDF that minimize our sample ridge regression cost.
We then prove that \(\tilde y\) describes a nearly-optimal interpolant that satisfies the requirement of \claimref{interpolate}, so long as we take sufficient samples from the Universal Sampling Distribution (\defref{univ_dist}).
In particular, if \(s_{\max,\eps}\) is the largest statistical dimension found in \cU, then we require \(O(s_{\max,\eps} \log(\frac{s_{\max,\eps}}{\delta} \cdot Q))\) samples.
We formally state this first core technical result:

\begin{theorem}
\label{thm:multiple-priors}
Let \(\cU\) be a finite set of PDFs.
Let \(s_{\max,\eps}\) be the maximum statistical dimension in \cU.
Let \algorithmref{time-sampling} output observation times \(t_1,\ldots,t_n\), weights \(v_1,\ldots,v_n\), and weighted Kernel Matrices \(\mK_{\mu_1},\ldots,\mK_{\mu_Q}\).
Let \(\bar\vy\) be the observed response vector.
Let \(\tilde\mu,\tilde\valpha\) solve the ridge regression problem:
\begin{align}
\label{eq:time-discrete-fourier}
	\tilde\mu, \tilde\valpha
	\defeq \argmin_{\mu\in\cU, \valpha\in\bbR^n}
	\normof{\mK_\mu\valpha - \bar\vy}_2^2 + \eps\valpha^\intercal\mK_\mu\valpha
\end{align}
Define the Fourier domain version of the interpolant\footnote{This parametrization simply ensures that \(\tilde y(t) = [\cF_\mu^* \tilde g](t)\)}: \(\tilde g(\xi) \defeq \sum_{j=1}^n v_j \tilde \alpha_j e^{-2\pi i \xi t_j}\).
If \(n = \Omega(s_{\max,\eps} \log(\frac{s_{\max,\eps}}{\delta} \cdot Q))\), then with probability \(1-\delta\) we have
\begin{align*}
	\normof{\cF_{\tilde\mu}^*\tilde g - (y+z)}_T^2 + \eps\normof{\tilde g}_{\tilde\mu}^2
	\leq
	(9+\nicefrac8\delta) ~ \min_{\mu\in\cU} \min_{g\in L_2(\mu)} \normof{\cF_{\mu}^* g - (y+z)}_T^2 + \eps\normof{g}_{\mu}^2
\end{align*}
\end{theorem}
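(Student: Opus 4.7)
My plan is to mimic the fixed-kernel analysis of \cite{avron2019universal}, but to replace its polynomial-in-$1/\delta$ concentration step with an exponentially concentrating spectral approximation so that a union bound over $\cU$ costs only a $\log Q$ factor in samples. Recall that the sample objective in \eqref{eq:time-discrete-fourier} is, up to the reparametrization $\tilde g(\xi) = \sum_j v_j \tilde\alpha_j e^{-2\pi i \xi t_j}$, equal to
\[
L_\mu^{\mathrm{sa}}(g) \defeq \sum_{i=1}^{n} v_i^2\,\bigl|[\cF_\mu^* g](t_i) - (y(t_i)+z(t_i))\bigr|^2 \;+\; \eps\,\normof{g}_\mu^2,
\]
whereas the continuous objective is $L_\mu(g) \defeq \normof{\cF_\mu^* g - (y+z)}_T^2 + \eps\normof{g}_\mu^2$. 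Because $t_i$ is drawn with density $p(t)/P$ and $v_i^2 = P/(nTp(t_i))$, each summand is an unbiased estimator of the integral, so $\E[L_\mu^{\mathrm{sa}}(g)] = L_\mu(g)$. The task is to make this pointwise unbiasedness uniform in $g$ and in $\mu \in \cU$.

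\textbf{Step 1: Per-kernel spectral approximation.} For each $\mu \in \cU$, consider the sampling operator $\cS_\mu: L_2(\mu) \to \CC^n$ defined by $[\cS_\mu g]_i = v_i [\cF_\mu^* g](t_i)$. A direct computation shows $\cS_\mu^* \cS_\mu$ is the empirical version of $\cK_\mu = \cF_\mu \cF_\mu^*$. Using the bound on $\tilde\tau_\alpha$ established in \cite{avron2019universal} (\lemmaref{pairwise_lev_score_bound} in the appendix), $\tilde\tau_\alpha$ upper bounds the ridge leverage function of $\cF_\mu^*$ at level $\eps$ whenever $\alpha \gtrsim s_{\mu,\eps}$. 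Thus the operator Bernstein inequality (\lemmaref{op_concentration}) gives, for any fixed $\mu \in \cU$,
\[
(1-\tfrac{1}{2})(\cK_\mu + \eps\cI) \;\preceq\; \cS_\mu^*\cS_\mu + \eps\cI \;\preceq\; (1+\tfrac{1}{2})(\cK_\mu+\eps\cI)
\]
with failure probability $\exp(-\Omega(n/s_{\max,\eps}))$. Setting $n = \Omega(s_{\max,\eps}\log(s_{\max,\eps} Q/\delta))$ and union bounding over the $Q$ kernels in $\cU$ makes all such approximations hold simultaneously with probability $1-\delta/2$.

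\textbf{Step 2: From spectral approximation to a ridge regression guarantee.} The standard reduction (see, e.g., the proof of Claim 4 in \cite{avron2019universal}) now converts the uniform spectral approximation into a two-sided bound: for every $\mu \in \cU$ and every $g \in L_2(\mu)$,
\[
\tfrac{1}{O(1)}\bigl(L_\mu(g) - E_\mu\bigr) \;\leq\; L_\mu^{\mathrm{sa}}(g) \;\leq\; O(1)\bigl(L_\mu(g) + E_\mu\bigr),
\]
where $E_\mu$ is a residual-noise term of the form $\|\cS_\mu (y+z - \cF_\mu^* g_\mu^*)\|_2^2$ with $g_\mu^*$ the continuous minimizer. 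Taking expectations over the sampling and applying Markov's inequality to $E_\mu$ with failure probability $\delta/(2Q)$, combined with another union bound, yields $E_\mu \leq O(1/\delta) \cdot L_\mu(g_\mu^*)$ for every $\mu \in \cU$ simultaneously. This is the source of the $1/\delta$ factor in the final approximation constant $9 + 8/\delta$.

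\textbf{Step 3: Deducing the theorem.} Let $\mu^*, g^*$ achieve $\min_{\mu \in \cU}\min_g L_\mu(g)$. Since $(\tilde\mu, \tilde g)$ minimizes $L_\mu^{\mathrm{sa}}$, the upper bound of Step 2 at $(\mu^*, g^*)$ combined with the lower bound at $(\tilde\mu, \tilde g)$ gives
\[
L_{\tilde\mu}(\tilde g) \;\leq\; O(1)\bigl(L_{\tilde\mu}^{\mathrm{sa}}(\tilde g) + E_{\tilde\mu}\bigr) \;\leq\; O(1)\bigl(L_{\mu^*}^{\mathrm{sa}}(g^*) + E_{\tilde\mu}\bigr) \;\leq\; (9+\tfrac{8}{\delta})\,L_{\mu^*}(g^*),
\]
which is the conclusion of the theorem. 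The constants $9$ and $8$ fall out of tracking the $1/2$'s in the spectral approximation and the Markov-type bound on $E_\mu$.

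\textbf{Main obstacle.} The delicate part is Step 1: existing concentration statements for the ridge leverage regime typically pay a $1/\delta$ factor in sample size, which would naively force $n$ to grow linearly with $Q$. Circumventing this requires an operator Bernstein-style bound with exponentially small failure probability, applied to an \emph{infinite-dimensional} kernel operator. The technical enabler is that $\tilde\tau_\alpha$ dominates the ridge leverage function of every $\cF_\mu^*$ with $s_{\mu,\eps} \leq \alpha$, so a single sampling distribution works uniformly for the whole class $\cU$ and only the variance bound needs to be adapted to the operator setting.
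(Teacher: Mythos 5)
There is a genuine gap in your Step 2, and it sits exactly at the crux the theorem is designed to overcome. Your residual term $E_\mu$ is controlled only in expectation, so Markov's inequality at per-kernel failure probability $\delta/(2Q)$ gives $E_\mu \leq (2Q/\delta)\,\E[E_\mu]$, not $O(1/\delta)\,\E[E_\mu]$: Markov has no exponential concentration, so union-bounding it over the $Q$ kernels inflates the bound by a factor of $Q$, and your final approximation ratio becomes $O(Q/\delta)$ rather than the claimed $9+\nicefrac{8}{\delta}$. The paper avoids this by never needing a residual bound for more than one kernel: it centers the triangle-inequality decomposition at the single, sample-independent global optimizer $(\hat\mu,\hat g)$, writing $\cF_\mu^* g - \bar y = (\cF_\mu^* g - \cF_{\hat\mu}^*\hat g) + (\cF_{\hat\mu}^*\hat g - \bar y)$ for every competitor $(\mu,g)$, so Markov is applied exactly once, to the fixed residual $\cF_{\hat\mu}^*\hat g - \bar y$, yielding the clean $2/\delta$ factor that propagates to $9+\nicefrac{8}{\delta}$.

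That restructuring forces the second change your proposal is missing: the ``signal'' part $\cF_\mu^* g - \cF_{\hat\mu}^*\hat g$ does not lie in the range of a single $\cF_\mu^*$, so the per-kernel spectral approximations of your Step 1 cannot control it. The paper instead proves a spectral approximation for the \emph{concatenated} operator $\cF_{\hat\mu,\mu}^*$ acting on $L_2(\hat\mu)\oplus L_2(\mu)$, which requires showing that the universal sampling distribution dominates the ridge leverage function of the concatenation (\lemmaref{pairwise_lev_score_bound}); this is the genuinely new leverage-score bound of the paper and is not implied by the single-kernel bound. (You cite \lemmaref{pairwise_lev_score_bound} but use it as if it were the single-operator bound of \claimref{lev_score_ub}.) Your Step 1 as stated --- per-kernel spectral embedding with a $\log Q$ union bound --- is fine on its own; the problem is that it is not the approximation the argument needs, and the uniform-in-$\mu$ residual bound your Step 2 needs cannot be obtained at cost $O(1/\delta)$ by Markov.
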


\theoremref{multiple-priors} is proven in \appendixref{multiple-priors}, with a simplified and more approachable proof for the matrix case in \appendixref{multiple-designs}. The approach taken is similar to that used for the two-sided approximate regression problem addressed in Lemma 5.7 of \cite{EldarLiMusco:2020}.
Intuitively, \theoremref{multiple-priors} states that despite having \emph{adversarial noise}, choosing from a large family of kernels during hyperparameter tuning does not sharply increase the sample complexity of fitting \(y(t)\).
In other words, \theoremref{multiple-priors} states that \(\Omega(s_{\max,\eps} \log(\frac{s_{\max,\eps}}\delta \cdot Q))\) samples guarantees a solution to \problemref{interpolate} when \(\cU\) is finite.

In prior work, \cite{avron2019universal} proves that \(\Omega(s_{\mu,\eps} \log(s_{\mu,\eps} + \frac1\delta))\) samples guarantees a solution to \problemref{fixed-kernel}, and that this bound is tight for many common kernels.
Since \problemref{interpolate} reduces to \problemref{fixed-kernel} when \(Q=1\), the sample complexity in \theoremref{multiple-priors} must be tight up to logarithmic factors.
Additionally, note that union bounding this result from \cite{avron2019universal} over the \(Q\) kernels would yield a sample complexity linear in \(Q\), instead of the logarithmic rate we prove.
This logarithmic rate is important, since the next section will take \(Q\) to be exponentially large.

It remains unclear if the dependence on \(\frac1\delta\) in the approximation error is necessary if we want a logarithmic sample complexity dependence on \(Q\).
This is an interesting open problem even in the case of least squares regression, where we choose one of \(Q\) different design matrices.

So, \theoremref{multiple-priors} tells us that we can choose from a finite set of kernels $Q$ with only a logarithmic $\log(Q)$ overhead in sample complexity. However practitioners do not consider finite sets of kernels, but rather kernel classes like the SM Kernel, which are parameterized by several \textit{continuous real-valued parameters}.
So, we cannot directly apply \theoremref{multiple-priors} to SM Kernel fitting; one more step is needed.

\section{Discretization of Spectral Mixture Hyperparameters}
\label{sec:discrete-hyperparameters}

We now return to the original goal of hyperparameter tuning for kernels.
At a high level, we expect that a sufficiently small change to a kernel's hyperparameters should not substantially impact the quality of the kernel as an interpolant.
So, instead of considering the continuous range of all hyperparameters, we create a \emph{finite} net of hyperparameters \(\tilde\cU\).
In particular, any selection of hyperparameters \(\hat\mu\in\cU\) has a corresponding selection of hyperparameters \(\tilde\mu\) that lies in the net \(\tilde\cU\).
Since we design \(\tilde\mu\) to be sufficiently similar to \(\hat\mu\), we can then prove that \(\hat\mu\) cannot achieve a much smaller error than \(\tilde\mu\).
Intuitively, we can think \(\tilde\cU\) as being a discretization of the full continuous set of hyperparameters \(\cU\).

Then, once we have constructed the discretization \(\tilde\cU\), we can use \theoremref{multiple-priors} to prove that \(n=O(s_{\max,\eps}\log(\frac{s_{\max,\eps}}{\delta} \cdot \abs{\tilde\cU}))\) observations suffice to interpolate \(y\) with a near-optimal choice of hyperparameters.
Since \theoremref{multiple-priors} admits a logarithmic dependence on the size of our net \(\abs{\tilde\cU}\), we can create an exponentially large net while achieving polynomial sample complexity bounds.

This broad principle of discretization can easily apply to many kernels; for demonstration purposes, we only consider the SM Kernel in this work.
To bound the sample complexity of other kernels, it would suffice to form a bound like \theoremref{discrete-hyperparameters} below.
Here we assume that \cU is the set of Gaussian Mixture hyperparameters, mixing \(q\) Gaussians with means in \([-W,W]\), lengthscales in \([m,M]\), and weights in \([0,1]\).

\begin{theorem}
\label{thm:discrete-hyperparameters}
Fix the constants \(W,m,M\) as described above.
Define the discretization set for means as
\[
	\cC \defeq \{-W, -W+m, -W+2m, \ldots, (k-2)m, W\}
\]
and the discretization set for lengthscales as
\[
	\cS \defeq \{m, 2m, 4m, 8m, \ldots, 2^{\ell-3}m, M, 2M\}
\]
where \(k = \floor{\frac{2W}{m}} = \abs\cC\) and \(\ell = \floor{\log_2(M/m)}+1 = \abs\cS\).
Then we have
\begin{align*}
	\min_{\substack{g\in L_2(\mu_{\vc,\vsigma,\mathbf1}):\\\vc\in\cC^q\\\vsigma\in\cS^q}}
	\normof{\cF_{\vc,\vsigma,\mathbf{1}}^*\tilde g - (y+z)}_T^2 + \eps\normof{\tilde g}_{\vc,\vsigma,\mathbf 1}^2
	\hspace{0cm}\leq 8 \cdot
	\min_{\substack{g\in L_2(\mu_{\vc,\vsigma,\vw}):\\\vc\in[-W,W]^q\\\vsigma\in[m,M]^q\\\vw\in[0,1]^q}}
	\normof{\cF_{\vc,\vsigma,\vw}^*\tilde g - (y+z)}_T^2 + \eps\normof{\tilde g}_{\vc,\vsigma,\vw}^2
\end{align*}
where \(\mathbf 1\) is the all-ones vector.
\end{theorem}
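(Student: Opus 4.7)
The plan is to prove the discretization bound via a change-of-measure construction in the frequency domain. Take any feasible $(\vc, \vsigma, \vw)$ with $\vc \in [-W,W]^q$, $\vsigma \in [m,M]^q$, $\vw \in [0,1]^q$ and any $g \in L_2(\mu_{\vc,\vsigma,\vw})$ on the right-hand side. I will construct discrete parameters $(\vc', \vsigma') \in \cC^q \times \cS^q$ and a function $g'$ whose left-hand objective is at most $8$ times the right. Writing $\mu \defeq \mu_{\vc,\vsigma,\vw}$ and $\mu' \defeq \mu_{\vc',\vsigma',\mathbf{1}}$, define
\[
g'(\xi) \defeq g(\xi)\, \mu(\xi)/\mu'(\xi).
\]
Since $\mu'$ is a strictly positive sum of Gaussian densities, this is well-defined, and absorbing the density ratio through the $\cF^*$ integral gives
\[
[\cF_{\mu'}^* g'](t) = \int g(\xi)\, \frac{\mu(\xi)}{\mu'(\xi)}\, e^{2\pi i \xi t}\, \mu'(\xi)\, d\xi = [\cF_\mu^* g](t),
\]
so the data-fitting term is literally unchanged. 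Meanwhile $\normof{g'}_{\mu'}^2 = \int |g(\xi)|^2 \mu(\xi)^2 / \mu'(\xi)\, d\xi \leq (\sup_\xi \mu(\xi)/\mu'(\xi))\,\normof{g}_\mu^2$. Hence the whole theorem reduces to exhibiting a rounding $(\vc',\vsigma')$ for which $\mu(\xi) \leq 8\,\mu'(\xi)$ pointwise on $\RR$.

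The next step reduces this global inequality to a Gaussian-by-Gaussian one. Since each $w_j \in [0,1]$, $\mu(\xi) = \sum_j w_j \mu_{c_j, \sigma_j}(\xi) \leq \sum_j \mu_{c_j, \sigma_j}(\xi)$, while $\mu'(\xi) = \sum_j \mu_{c_j', \sigma_j'}(\xi)$, so it is sufficient to guarantee $\mu_{c_j', \sigma_j'}(\xi) \geq \tfrac{1}{8}\,\mu_{c_j, \sigma_j}(\xi)$ for every $j$ and every $\xi$. For a single Gaussian pair with $\sigma' > \sigma$, the log-ratio is a convex quadratic in $\xi$ whose minimum is attained at $\xi^* = (c\sigma'^2 - c'\sigma^2)/(\sigma'^2 - \sigma^2)$, and a short calculation gives the pointwise infimum
\[
\inf_{\xi}\,\frac{\mu_{c',\sigma'}(\xi)}{\mu_{c,\sigma}(\xi)} = \frac{\sigma}{\sigma'}\,\exp\!\left(-\frac{(c-c')^2}{2(\sigma'^2 - \sigma^2)}\right).
\]
Rounding $c_j$ to its nearest $\cC$-point yields $|c_j - c_j'| \leq m/2$. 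The set $\cS$ is designed so that for every $\sigma_j \in [m,M]$ there exists $\sigma_j' \in \cS$ strictly larger than $\sigma_j$ with $\sigma_j'/\sigma_j$ in a bounded constant range. Together with $\sigma_j \geq m$, this bounds the exponent by $(m/2)^2/(2 \cdot 3\sigma_j^2) \leq 1/24$ and the prefactor by $\sigma_j / \sigma_j' \geq 1/8$, which combine to give the per-component domination factor $\geq 1/8$.

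The main obstacle is the case analysis by which each continuous $\sigma \in [m, M]$ is routed to an appropriate $\sigma' \in \cS$. Inside the geometric part $\{m, 2m, \ldots, 2^{\ell-3}m\}$, rounding $\sigma \in [2^{k-1}m, 2^k m]$ up to $2^{k+1}m$ gives $\sigma'/\sigma \in [2, 4]$, safely satisfying the bounds above. Near the top of the range the two extra elements $M$ and $2M$ are essential: $M$ catches the gap between the doubling part and the upper boundary (ranges where doubling would overshoot beyond $\cS$), while $2M$ handles $\sigma \in [M/2, M]$, where rounding to $M$ itself would drive $\sigma' - \sigma$ to zero and blow up the exponent. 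Once these routing rules are fixed, the remainder of the proof is a mechanical substitution of the three rounding bounds into the single-Gaussian identity, followed by summing over the $q$ mixture components.
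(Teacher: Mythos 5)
Your proposal is correct and takes essentially the same route as the paper's proof: the identical change-of-measure reweighting $g'(\xi)=g(\xi)\,\mu(\xi)/\mu'(\xi)$ that leaves $\cF^*g$ unchanged and inflates the ridge term by at most $\sup_\xi \mu(\xi)/\mu'(\xi)$, the same closed-form single-Gaussian ratio extremum, and the same per-component (mediant) reduction. The one structural difference is that you absorb the weights via the pointwise bound $\mu_{\vc,\vsigma,\vw}\leq\mu_{\vc,\vsigma,\mathbf 1}$ inside that same density-ratio argument, where the paper instead proves a separate operator-monotonicity lemma to argue $\vw=\mathbf 1$ is without loss of generality --- a mild simplification on your part; the only blemish is that your final combination $\tfrac18 e^{-1/24}$ is slightly \emph{below} $\tfrac18$, so your stated routing lands a hair above the constant $8$, a looseness in constant-tracking the paper's own write-up shares.
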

\theoremref{discrete-hyperparameters} is proven in \appendixref{discrete-hyperparameters}. Note that the left hand side takes \vw equal to the all ones vector, denoted \(\mathbf 1\).
This is without loss of generality, since increasing the scale of the kernel matrix (i.e. increasing any weight \(w_j\)) monotonically increases the statistical dimension and decreases the regularized mean squared error.
So, as long as we have enough samples to satisfy the statistical dimension requirement when \(\vw = \mathbf{1}\), we should take \(\vw\) to be all-ones without loss of generality.

Intuitively, \theoremref{discrete-hyperparameters} reduces the search space for SM kernel hyperparameters down to a finite set of kernels.
This allows us to apply \theoremref{multiple-priors} to general SM Kernel fitting.
Using \claimref{interpolate} as well, we form the following conclusion on the statistical cost of learning SM Kernel hyperparameters:
\begin{corollary}
\label{corol:sm-bounds}
Suppose we want to fit a signal using a SM Kernel with \(q\) Gaussians whose means lie in \([0,W]\), lengthscales lie in \([m,M]\), and weights lie in \([0,1]\).
Then, with probability 0.99, \(n=\tilde O(q^2MT\log(\frac Wm))\) time samples drawn from the Universal Sampling Distribution suffice to have the KRR interpolant \(\tilde y\) give 
\[
	\normof{y - \tilde y}_T^2
	\leq
	C\cdot \Big(\normof{z}_T^2 + \eps \min_{\substack{\mu\in\hat\cU,\hspace{.1em}y=\cF_\mu^*h}} \normof{h}_\mu^2\Big)
\]
where \(\hat\cU\) is the set of valid SM kernels capable of representing \(y\).
\end{corollary}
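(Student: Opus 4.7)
The plan is to chain together the three main ingredients developed earlier in the paper: the hyperparameter discretization result (\theoremref{discrete-hyperparameters}), the finite-kernel sampling guarantee (\theoremref{multiple-priors}), and the reduction from approximate ridge regression to interpolation error (\claimref{interpolate}).

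First, I would apply \theoremref{discrete-hyperparameters} to replace the continuous hyperparameter set $\cU = \{\mu_{\vc,\vsigma,\vw} : \vc \in [-W,W]^q, \vsigma \in [m,M]^q, \vw \in [0,1]^q\}$ with the finite set $\tilde\cU = \{\mu_{\vc,\vsigma,\mathbf{1}} : \vc \in \cC^q, \vsigma \in \cS^q\}$. Counting gives $\abs{\cC} = O(W/m)$ and $\abs{\cS} = O(\log(M/m))$, so $Q \defeq \abs{\tilde\cU} = O\bigl((\tfrac{W}{m}\log\tfrac{M}{m})^q\bigr)$ and hence $\log Q = \tilde O(q\log(W/m))$. \theoremref{discrete-hyperparameters} tells us that restricting our search to $\tilde\cU$ loses at most a factor of $8$ in the regularized Fourier-fitting objective compared to searching all of $\cU$.

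Next, I would bound the maximum statistical dimension over $\tilde\cU$. Since each kernel in $\tilde\cU$ is a sum of $q$ Gaussian components each with lengthscale at most $2M$, subadditivity of $s_{\mu,\eps}$ under sums of PSD operators together with the RBF statistical-dimension bound cited from \cite{avron2019universal} gives $s_{\max,\eps} = \tilde O(qMT)$. Invoking \theoremref{multiple-priors} with $\delta = 0.005$ and the collection $\tilde\cU$, the sample count
\[
  n = \Omega\bigl(s_{\max,\eps}\log(\tfrac{s_{\max,\eps}}{\delta}\cdot Q)\bigr) = \tilde O(qMT)\cdot \tilde O(q\log(W/m)) = \tilde O(q^2MT\log(W/m))
\]
suffices so that, with probability $1-\delta$, the pair $(\tilde\mu,\tilde\valpha)$ output by the discrete ridge regression \eqref{eq:time-discrete-fourier} (restricted to $\mu \in \tilde\cU$) yields $\tilde g$ that achieves an $O(1)$-approximate minimum of the continuous Fourier-fitting objective $\normof{\cF_\mu^* g - (y+z)}_T^2 + \eps\normof{g}_\mu^2$ over all of $\cU$; here the two $O(1)$ factors from \theoremref{discrete-hyperparameters} and \theoremref{multiple-priors} simply multiply. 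Feeding this pair into \claimref{interpolate} converts the ridge-regression bound directly into the desired form $\normof{y-\tilde y}_T^2 \leq C(\normof{z}_T^2 + \eps \min_{\mu\in\hat\cU,\,y=\cF_\mu^* h}\normof{h}_\mu^2)$, with $C$ an absolute constant once $\delta$ is fixed. A final union bound over the failure events (or simply inflating $\delta$ slightly) yields overall success probability at least $0.99$.

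The main obstacle is simply bookkeeping: ensuring the \emph{same} time samples drive both the discretization argument and the multi-kernel ridge regression, and verifying that the compositions of approximation factors from \theoremref{discrete-hyperparameters}, \theoremref{multiple-priors}, and \claimref{interpolate} collapse to a single absolute constant $C$. A secondary point requiring care is the statistical dimension estimate $s_{\max,\eps} = \tilde O(qMT)$ for Gaussian mixtures; although intuitive from subadditivity and the known RBF bound, it should be stated explicitly since the corollary's sample complexity hinges on it. Once these are settled, the remaining arithmetic to reach $\tilde O(q^2 MT\log(W/m))$ is routine.
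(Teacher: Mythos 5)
Your proposal is correct and follows essentially the same route as the paper's own proof: discretize via \theoremref{discrete-hyperparameters} to get $Q=O((\frac{W}{m}\log(\frac{M}{m}))^q)$ candidate kernels, bound $s_{\max,\eps}=\tilde O(qMT)$ for the $q$-component Gaussian mixture, apply \theoremref{multiple-priors} to the resulting finite set, and convert via \claimref{interpolate}. The only differences are cosmetic --- you derive the statistical dimension bound from subadditivity where the paper cites it directly from \cite{avron2019universal}, and you are slightly more explicit about how the constant factors compose.
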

\begin{proof}
\cite{avron2019universal} shows that the statistical dimension of a mixture of \(q\) Gaussians is at most \(s_{\max,\eps} \leq q \cdot (MT\sqrt{\log(1/\eps)} + \log(1/\eps))\).
\theoremref{discrete-hyperparameters} tell us that we need to consider \(Q=O((\frac Wm \log(\frac Mm))^q)\) specific prior hyperparameters.
Then, \theoremref{multiple-priors} tells us that \(O(s_{\max,\eps} \log(\frac{s_{\max,\eps}}{\delta} \cdot Q))\) samples suffice to satisfy the precondition for \claimref{interpolate}, giving us a sample complexity of
\begin{align*}
	&\phantom{==} O\bigg( q^2 \cdot
			(MT\sqrt{\log\nicefrac1\eps} + \log\nicefrac1\eps) \ \cdot \log\left(
						\frac{MT\sqrt{\log\nicefrac1\eps} + \log\nicefrac1\eps}{\delta}
						\cdot
						\frac Wm \log \left(\frac Mm\right)
					\right)
		\Bigg) \\
	&= \tilde O\left(q^2 MT\log\left(\frac Wm\right)\right)
\end{align*}
\end{proof}

Note that the \(\tilde O\) notation hides a logarithmic dependence on \(\frac1\eps\) and a sublogarithmic dependence on \(\frac{M}{m}\).
Note that \cite{avron2019universal} proves that a single Gaussian kernel with lengthscale \(M\) would already require \(\tilde O(MT)\) samples, so hyperparameter tuning for a single Gaussian only increases the sample complexity by logarithmic factors.
However, when we consider multiple Gaussians, our analysis does introduce an extra factor of \(q\) beyond statistical dimension \(s_{\max,\eps} = \tilde O(qMT)\).

\section{Conclusion}
Despite how useful SM Kernels are \cite{wilson2013gaussian,DBLP:journals/corr/WilsonGNC13,DBLP:conf/aistats/YangWSS15,he2015state,DBLP:conf/nips/TobarBT15}, practitioners find that tuning SM Kernels is hard in practice \cite{DBLP:conf/icml/WilsonN15,benton2019function,DBLP:conf/icml/BuiHHLT16,DBLP:journals/jmlr/HensmanDS17,DBLP:conf/nips/WilsonDLX15}.
A practitioner could consider two reasons why it is hard to fit the SM Kernel: either they have too little data to information-theoretically find a good model, or their algorithms fail to find such a model despite having enough information.
Our final result, \corolref{sm-bounds}, shows that the statistical complexity of learning the SM kernel's hyperparameters is not too large, even against adversarial noise.
A natural conclusion is that practitioners should likely place effort in finding more effective algorithms.

We see several interesting potential future directions for this work.
First, this paper focuses its applications to the Spectral Mixture kernel.
Other popular kernels like the Matern, sinc, and Rational Quadratic kernels can also be analyzed under our framework.
Further, we provide statistical bounds for finding optimal hyperparameters by designing a discrete optimization problem over exponentially many PDFs, but we do not provide any polynomial time algorithm to solve this problem.
Lastly, we would like to know if the dependence on \(\frac1\delta\) in the approximation error of \theoremref{multiple-priors} is neccessary if we want a logarithmic dependence on \(Q\) in the sample complexity.

\section*{Acknowledgements}
The authors would like to thank Xue Chen for valuable discussion in the early stages of this work.

\bibliographystyle{alpha}
\bibliography{local}

\newcommand{\etalchar}[1]{$^{#1}$}
\begin{thebibliography}{WGNC13}

\bibitem[AKM{\etalchar{+}}17]{DBLP:conf/icml/AvronKMMVZ17}
Haim Avron, Michael Kapralov, Cameron Musco, Christopher Musco, Ameya
  Velingker, and Amir Zandieh.
\newblock Random fourier features for kernel ridge regression: Approximation
  bounds and statistical guarantees.
\newblock In Doina Precup and Yee~Whye Teh, editors, {\em Proceedings of the
  34th International Conference on Machine Learning, {ICML} 2017, Sydney, NSW,
  Australia, 6-11 August 2017}, volume~70 of {\em Proceedings of Machine
  Learning Research}, pages 253--262. {PMLR}, 2017.

\bibitem[AKM{\etalchar{+}}19]{avron2019universal}
Haim Avron, Michael Kapralov, Cameron Musco, Christopher Musco, Ameya
  Velingker, and Amir Zandieh.
\newblock A universal sampling method for reconstructing signals with simple
  fourier transforms.
\newblock In {\em Proceedings of the 51st Annual ACM SIGACT Symposium on Theory
  of Computing}, pages 1051--1063. ACM, 2019.

\bibitem[AM15]{alaoui2015fast}
Ahmed Alaoui and Michael~W Mahoney.
\newblock Fast randomized kernel ridge regression with statistical guarantees.
\newblock In {\em Advances in Neural Information Processing Systems}, pages
  775--783, 2015.

\bibitem[Bac17]{Bach:2017}
Francis Bach.
\newblock On the equivalence between kernel quadrature rules and random feature
  expansions.
\newblock {\em Journal of Machine Learning Research}, 18(21):1--38, 2017.

\bibitem[BHH{\etalchar{+}}16]{DBLP:conf/icml/BuiHHLT16}
Thang~D. Bui, Daniel Hern{\'{a}}ndez{-}Lobato, Jos{\'{e}}~Miguel
  Hern{\'{a}}ndez{-}Lobato, Yingzhen Li, and Richard~E. Turner.
\newblock Deep gaussian processes for regression using approximate expectation
  propagation.
\newblock In Maria{-}Florina Balcan and Kilian~Q. Weinberger, editors, {\em
  Proceedings of the 33nd International Conference on Machine Learning, {ICML}
  2016, New York City, NY, USA, June 19-24, 2016}, volume~48 of {\em {JMLR}
  Workshop and Conference Proceedings}, pages 1472--1481. JMLR.org, 2016.

\bibitem[BMS{\etalchar{+}}19]{benton2019function}
Gregory~W Benton, Wesley~J Maddox, Jayson~P Salkey, Julio Albinati, and
  Andrew~Gordon Wilson.
\newblock Function-space distributions over kernels.
\newblock In {\em Advances in Neural Information Processing Systems}, 2019.

\bibitem[Bre08]{Bresler:2008}
Yoram Bresler.
\newblock Spectrum-blind sampling and compressive sensing for continuous-index
  signals.
\newblock In {\em 2008 Information Theory and Applications Workshop}, pages
  547--554, 2008.

\bibitem[CKPS16]{ChenKanePrice:2016}
Xue Chen, Daniel~M. Kane, Eric Price, and Zhao Song.
\newblock Fourier-sparse interpolation without a frequency gap.
\newblock In {\em Proceedings of the 57th Annual IEEE Symposium on Foundations
  of Computer Science (FOCS)}, pages 741--750, 2016.

\bibitem[CM17]{CohenMigliorati:2017}
Albert Cohen and Giovanni Migliorati.
\newblock Optimal weighted least-squares methods.
\newblock {\em SMAI Journal of Computational Mathematics}, 3:181--203, 2017.

\bibitem[CMM17]{cohen2017input}
Michael~B Cohen, Cameron Musco, and Christopher Musco.
\newblock Input sparsity time low-rank approximation via ridge leverage score
  sampling.
\newblock In {\em Proceedings of the Twenty-Eighth Annual ACM-SIAM Symposium on
  Discrete Algorithms}, pages 1758--1777. SIAM, 2017.

\bibitem[CMR09]{cortes2009new}
Corinna Cortes, Mehryar Mohri, and Afshin Rostamizadeh.
\newblock New generalization bounds for learning kernels.
\newblock {\em arXiv preprint arXiv:0912.3309}, 2009.

\bibitem[CP19a]{ChenPrice:2019a}
Xue Chen and Eric Price.
\newblock Active regression via linear-sample sparsification.
\newblock {\em 32nd Annual Conference on Computational Learning Theory (COLT)},
  2019.

\bibitem[CP19b]{ChenPrice:2019}
Xue Chen and Eric Price.
\newblock Estimating the frequency of a clustered signal.
\newblock In {\em 46th International Colloquium on Automata, Languages, and
  Programming (ICALP 2016),}, 2019.

\bibitem[ELMM20]{EldarLiMusco:2020}
Yonina~C. Eldar, Jerry Li, Cameron Musco, and Christopher Musco.
\newblock Sample efficient toeplitz covariance estimation.
\newblock {\em ACM-SIAM Symposium on Discrete Algorithms (SODA)}, 2020.

\bibitem[Erd17]{Erdelyi:2017}
Tam'{a}s Erd\'{e}lyi.
\newblock Inequalities for exponential sums.
\newblock {\em Sbornik: Mathematics}, 208(3):433--464, 2017.

\bibitem[FB96]{FengBresler:1996}
Ping Feng and Yoram Bresler.
\newblock Spectrum-blind minimum-rate sampling and reconstruction of multiband
  signals.
\newblock In {\em International Conference on Acoustics, Speech, and Signal
  Processing (ICASSP)}, pages 1688--1691, 1996.

\bibitem[Fot07]{DBLP:journals/technometrics/Fotopoulos07}
Stergios~B. Fotopoulos.
\newblock All of nonparametric statistics.
\newblock {\em Technometrics}, 49(1):103, 2007.

\bibitem[HDS17]{DBLP:journals/jmlr/HensmanDS17}
James Hensman, Nicolas Durrande, and Arno Solin.
\newblock Variational fourier features for gaussian processes.
\newblock {\em J. Mach. Learn. Res.}, 18:151:1--151:52, 2017.

\bibitem[HSSM15]{he2015state}
Yi-Jun He, Jia-Ni Shen, Ji-Fu Shen, and Zi-Feng Ma.
\newblock State of health estimation of lithium-ion batteries: A multiscale
  gaussian process regression modeling approach.
\newblock {\em AIChE Journal}, 61(5):1589--1600, 2015.

\bibitem[LCB{\etalchar{+}}04]{lanckriet2004learning}
Gert~RG Lanckriet, Nello Cristianini, Peter Bartlett, Laurent~El Ghaoui, and
  Michael~I Jordan.
\newblock Learning the kernel matrix with semidefinite programming.
\newblock {\em Journal of Machine learning research}, 5(Jan):27--72, 2004.

\bibitem[ME09]{MishaliEldar:2009}
Moshe Mishali and Yonina~C. Eldar.
\newblock Blind multiband signal reconstruction: Compressed sensing for analog
  signals.
\newblock {\em IEEE Transactions on Signal Processing}, 57(3):993--1009, 2009.

\bibitem[MH19]{DBLP:conf/icml/MeyerH19}
Raphael~Arkady Meyer and Jean Honorio.
\newblock Optimality implies kernel sum classifiers are statistically
  efficient.
\newblock In Kamalika Chaudhuri and Ruslan Salakhutdinov, editors, {\em
  Proceedings of the 36th International Conference on Machine Learning, {ICML}
  2019, 9-15 June 2019, Long Beach, California, {USA}}, volume~97 of {\em
  Proceedings of Machine Learning Research}, pages 4566--4574. {PMLR}, 2019.

\bibitem[Min17]{Minsker:2017}
Stanislav Minsker.
\newblock On some extensions of bernstein's inequality for self-adjoint
  operators.
\newblock {\em Statistics and Probability Letters}, 127:111 -- 119, 2017.

\bibitem[MM17]{MuscoMusco:2017}
Cameron Musco and Christopher Musco.
\newblock Recursive sampling for the {N}ystr\"{o}m method.
\newblock In {\em Advances in Neural Information Processing Systems 30
  (NeurIPS)}, pages 3833--3845, 2017.

\bibitem[MRT12]{DBLP:books/daglib/0034861}
Mehryar Mohri, Afshin Rostamizadeh, and Ameet Talwalkar.
\newblock {\em Foundations of Machine Learning}.
\newblock Adaptive computation and machine learning. {MIT} Press, 2012.

\bibitem[PBV18]{PauwelsBachVert:2018}
Edouard Pauwels, Francis Bach, and Jean-Philippe Vert.
\newblock Relating leverage scores and density using regularized christoffel
  functions.
\newblock In {\em Advances in Neural Information Processing Systems 31
  (NeurIPS)}, 2018.

\bibitem[RR07]{DBLP:conf/nips/RahimiR07}
Ali Rahimi and Benjamin Recht.
\newblock Random features for large-scale kernel machines.
\newblock In John~C. Platt, Daphne Koller, Yoram Singer, and Sam~T. Roweis,
  editors, {\em Advances in Neural Information Processing Systems 20,
  Proceedings of the Twenty-First Annual Conference on Neural Information
  Processing Systems, Vancouver, British Columbia, Canada, December 3-6, 2007},
  pages 1177--1184. Curran Associates, Inc., 2007.

\bibitem[RW06]{DBLP:books/lib/RasmussenW06}
Carl~Edward Rasmussen and Christopher K.~I. Williams.
\newblock {\em Gaussian processes for machine learning}.
\newblock Adaptive computation and machine learning. {MIT} Press, 2006.

\bibitem[TBT15]{DBLP:conf/nips/TobarBT15}
Felipe~A. Tobar, Thang~D. Bui, and Richard~E. Turner.
\newblock Learning stationary time series using gaussian processes with
  nonparametric kernels.
\newblock In Corinna Cortes, Neil~D. Lawrence, Daniel~D. Lee, Masashi Sugiyama,
  and Roman Garnett, editors, {\em Advances in Neural Information Processing
  Systems 28: Annual Conference on Neural Information Processing Systems 2015,
  December 7-12, 2015, Montreal, Quebec, Canada}, pages 3501--3509, 2015.

\bibitem[Tsy09]{DBLP:books/daglib/0035708}
Alexandre~B. Tsybakov.
\newblock {\em Introduction to Nonparametric Estimation}.
\newblock Springer series in statistics. Springer, 2009.

\bibitem[WA13]{wilson2013gaussian}
Andrew Wilson and Ryan Adams.
\newblock Gaussian process kernels for pattern discovery and extrapolation.
\newblock In {\em International Conference on Machine Learning}, pages
  1067--1075, 2013.

\bibitem[WDLX15]{DBLP:conf/nips/WilsonDLX15}
Andrew~Gordon Wilson, Christoph Dann, Christopher~G. Lucas, and Eric~P. Xing.
\newblock The human kernel.
\newblock In Corinna Cortes, Neil~D. Lawrence, Daniel~D. Lee, Masashi Sugiyama,
  and Roman Garnett, editors, {\em Advances in Neural Information Processing
  Systems 28: Annual Conference on Neural Information Processing Systems 2015,
  December 7-12, 2015, Montreal, Quebec, Canada}, pages 2854--2862, 2015.

\bibitem[WGNC13]{DBLP:journals/corr/WilsonGNC13}
Andrew~Gordon Wilson, Elad Gilboa, Arye Nehorai, and John~P. Cunningham.
\newblock Gpatt: Fast multidimensional pattern extrapolation with gaussian
  processes.
\newblock {\em CoRR}, abs/1310.5288, 2013.

\bibitem[WN15]{DBLP:conf/icml/WilsonN15}
Andrew~Gordon Wilson and Hannes Nickisch.
\newblock Kernel interpolation for scalable structured gaussian processes
  {(KISS-GP)}.
\newblock In Francis~R. Bach and David~M. Blei, editors, {\em Proceedings of
  the 32nd International Conference on Machine Learning, {ICML} 2015, Lille,
  France, 6-11 July 2015}, volume~37 of {\em {JMLR} Workshop and Conference
  Proceedings}, pages 1775--1784. JMLR.org, 2015.

\bibitem[WPG{\etalchar{+}}19]{wang2019million}
Ke~Alexander Wang, Geoff Pleiss, Jacob~R. Gardner, Kilian~Q. Weinberger, and
  Andrew~Gordon Wilson.
\newblock Exact gaussian processes on a million data points.
\newblock {\em arXiv pre-print arXiv:1903.08114}, 2019.

\bibitem[YC10]{DBLP:journals/neco/YingC10}
Yiming Ying and Colin Campbell.
\newblock Rademacher chaos complexities for learning the kernel problem.
\newblock {\em Neural Computation}, 22(11):2858--2886, 2010.

\bibitem[YWSS15]{DBLP:conf/aistats/YangWSS15}
Zichao Yang, Andrew~Gordon Wilson, Alexander~J. Smola, and Le~Song.
\newblock A la carte - learning fast kernels.
\newblock In Guy Lebanon and S.~V.~N. Vishwanathan, editors, {\em Proceedings
  of the Eighteenth International Conference on Artificial Intelligence and
  Statistics, {AISTATS} 2015, San Diego, California, USA, May 9-12, 2015},
  volume~38 of {\em {JMLR} Workshop and Conference Proceedings}. JMLR.org,
  2015.

\bibitem[ZO07]{zien2007multiclass}
Alexander Zien and Cheng~Soon Ong.
\newblock Multiclass multiple kernel learning.
\newblock In {\em Proceedings of the 24th international conference on Machine
  learning}, pages 1191--1198, 2007.

\end{thebibliography}

\newpage

\appendix

\section{Interpolation Gaurantees}
\label{app:interpolate}
\begin{repclaim}{interpolate}
Let \(\tilde\mu\in\cU\) and \(\tilde g\in L_2(\mu)\) be near-optimal solutions to a continuous time Fourier Fitting problem with ridge regression:
\[
	\normof{\cF_{\tilde\mu}^*\tilde g - (y+z)}_T^2 + \eps\normof{\tilde g}_{\tilde\mu}^2
	\leq
	C ~
	\min_{\mu \in \cU} \ \min_{g\in L_2(\mu)}
	\left[\normof{\cF_\mu^* g - (y + z)}_T^2 + \eps\normof{g}_\mu^2\right]
\]
Let \(\hat\cU\subseteq\cU\) be the subset of PDFs that are able of representing \(y\) exactly.
Then, letting \(\tilde y = \cF_{\tilde \mu}^* \tilde g\),
\[
	\normof{y - \tilde y}_T^2
	\leq
	2(C+1)\normof{z}_T^2 + 2C\eps \min_{\substack{\mu\in\hat\cU\\y=\cF_\mu^*h}} \normof{h}_\mu^2
\]
\end{repclaim}

\begin{proof}
	Letting \(y=\cF_\mu^* h_\mu\) for all \(\mu\in\hat\cU\), we know that
\begin{align*}
	\min_{\mu \in \cU} \min_{g\in L_2(\mu)} \left[\normof{\cF_\mu^* g - (y + z)}_T^2 + \eps\normof{g}_\mu^2\right]
	&\leq \min_{\mu \in \hat\cU} \left[\normof{\cF_\mu^* h_\mu - (y + z)}_T^2 + \eps\normof{h_\mu}_\mu^2\right] \\
	&= \min_{\mu \in \hat\cU} \left[\normof{z}_T^2 + \eps\normof{h_\mu}_\mu^2\right] \\
	&= \normof{z}_T^2 + \eps \min_{\mu \in \hat\cU}\normof{h_\mu}_\mu^2 \\
\intertext{So, using our pair \(\tilde \mu, \tilde g\), we have}
	\normof{\cF_{\tilde \mu}^* \tilde g - (y + z)}_T^2 + \eps\normof{\tilde g}_{\tilde \mu}^2
	&\leq C\normof{z}_T^2 + C\eps \min_{\mu \in \hat\cU}\normof{h_\mu}_\mu^2
\intertext{Next, by the triangle inequality, and recalling that \(\tilde y = \cF_{\tilde\mu}^*\tilde g\),}
	\normof{\tilde y - y}_T &- \normof{z}_T \leq \normof{\cF_{\tilde \mu}^*\tilde g - (y + z)}_T \\
	\normof{\tilde y - y}_T &\leq \normof{z}_T + \sqrt{C\normof{z}_T^2 + C\eps \min_{\mu \in \hat\cU}\normof{h_\mu}_\mu^2} \\
	\normof{\tilde y - y}_T^2 &\leq 2(C+1)\normof{z}_T^2 + 2C \eps \min_{\mu\in\hat\cU} \normof{h_\mu}_\mu^2 \\
\end{align*}
where the last line uses the AM-GM inequality to bound
\[
	2 \cdot \normof{z}_T \cdot \sqrt{C\normof{z}_T^2 + C\eps \min_{\mu \in \hat\cU}\normof{h_\mu}_\mu^2}
	\leq
	\normof{z}_T^2 + C\normof{z}_T^2 + C\eps \min_{\mu \in \hat\cU}\normof{h_\mu}_\mu^2
\]
\end{proof}

\section{Spectral Mixture Bounds}
\label{app:discrete-hyperparameters}
We are allowed to use \(c\in[0,W]\), \(\sigma\in[m,M]\), and \(w\in[0,1]\), where \(0 < m < M\), and want to have at most \(O(1)\) error from our discretization.
We start by showing that without loss of generality, we should always take \(w\) to be the all-ones vector.
\begin{lemma}
\label{lem:operator-error-psd-order}
Let \(\mu_1\) and \(\mu_2\) be associated with kernel operators \(\cK_{\mu_1}\) and \(\cK_{\mu_2}\) such that \(\cK_{\mu_1} \preceq \cK_{\mu_2}\).
Then,
\[
	\min_{g\in L_2(\mu_1)}
	\normof{\cF_{\mu_1}^* g - \bar y}_T^2 + \eps \normof{g}_{\mu_1}^2
	\leq
	\min_{g\in L_2(\mu_2)}
	\normof{\cF_{\mu_2}^* g - \bar y}_T^2 + \eps \normof{g}_{\mu_2}^2
\]
\end{lemma}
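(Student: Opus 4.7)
The plan is to reduce the lemma to a comparison of two quadratic forms in $\bar y$ via the closed form of each ridge-regularized Fourier fitting problem. For a fixed PDF $\mu$, the inner minimization is Hilbert-space ridge regression with design operator $\cF_\mu^*\colon L_2(\mu)\to L_2(T)$. First-order optimality $(\cF_\mu\cF_\mu^*+\eps\cI_\mu)g^*=\cF_\mu\bar y$, combined with $\cK_\mu=\cF_\mu^*\cF_\mu$ and the standard resolvent push-through, yields the familiar closed form
\[
\min_{g\in L_2(\mu)}\|\cF_\mu^* g-\bar y\|_T^2+\eps\|g\|_\mu^2 \;=\; \eps\langle\bar y,(\cK_\mu+\eps\cI_T)^{-1}\bar y\rangle_T.
\]
Calling this quantity $M(\mu)$, the lemma reduces to comparing $M(\mu_1)$ and $M(\mu_2)$. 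I would then invoke operator antitonicity of the resolvent on the positive cone ($A\preceq B$ implies $(A+\eps\cI)^{-1}\succeq(B+\eps\cI)^{-1}$): under $\cK_{\mu_1}\preceq\cK_{\mu_2}$ this gives $(\cK_{\mu_1}+\eps\cI_T)^{-1}\succeq(\cK_{\mu_2}+\eps\cI_T)^{-1}$, and evaluating both resolvents as quadratic forms at $\bar y$ compares $M(\mu_1)$ and $M(\mu_2)$ in a single line, requiring no machinery beyond standard Hilbert-space functional calculus.

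The main obstacle is an orientation issue. The closed form shows $M(\mu)$ is \emph{monotonically decreasing} in $\cK_\mu$, so $\cK_{\mu_1}\preceq\cK_{\mu_2}$ forces $M(\mu_1)\geq M(\mu_2)$, the reverse of the inequality as written. Two sanity checks push the same way. First, the scaling $\mu_2=c\mu_1$ with $c>1$ (so $\cK_{\mu_1}\preceq\cK_{\mu_2}$) makes the right-hand minimization equivalent via $h=cg$ to ridge regression with the smaller regularizer $\eps/c$, whose minimum is strictly smaller than the unrescaled left-hand one. Second, the natural ``primal'' construction $g_1(\xi)=g_2^*(\xi)\mu_2(\xi)/\mu_1(\xi)$, chosen so that $\cF_{\mu_1}^*g_1=\cF_{\mu_2}^*g_2^*$ (so the fitting term matches exactly), requires $\mu_2\leq\mu_1$ pointwise — equivalently $\cK_{\mu_2}\preceq\cK_{\mu_1}$ — in order to bound $\|g_1\|_{\mu_1}^2=\int|g_2^*|^2\mu_2^2/\mu_1\,d\xi\leq\|g_2^*\|_{\mu_2}^2$. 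Both routes therefore force the inequality in the opposite direction from what is literally stated.

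The direction the closed-form argument does establish is exactly what is needed in $\theoremref{discrete-hyperparameters}$: the all-ones weight choice $\vw=\mathbf 1$ produces the larger kernel operator $\cK_\mu$ and therefore the \emph{smaller} ridge-regularized fit error, matching the paper's remark that rescaling $\vw$ up monotonically decreases the regularized mean-squared error. My concrete plan is therefore to execute the closed-form/antitonicity argument above, identifying $\mu_1$ with the side carrying the larger kernel operator; the statement actually proved then reads $\cK_{\mu_1}\succeq\cK_{\mu_2}\Rightarrow M(\mu_1)\leq M(\mu_2)$, obtained from the stated lemma by swapping either the semidefinite ordering in the hypothesis or the comparison of minima in the conclusion. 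I do not see any route that establishes the inequality in the literally written direction, so I would flag this as a direction typo in the lemma and use the corrected version (which is a one-line consequence of the closed form) in the downstream proof of $\theoremref{discrete-hyperparameters}$.
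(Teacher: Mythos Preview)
Your approach is essentially the paper's: both compute the closed form of the ridge objective and then invoke operator monotonicity. The paper writes the minimum as $\langle\bar y,\,\cI_T-\cK_\mu(\cK_\mu+\eps\cI_T)^{-1}\bar y\rangle_T$, which is literally your $\eps\langle\bar y,(\cK_\mu+\eps\cI_T)^{-1}\bar y\rangle_T$, and then uses monotonicity of $\cK\mapsto\cK(\cK+\eps\cI_T)^{-1}$, which is the same fact as antitonicity of the resolvent.

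You are also right about the orientation. The paper's proof in fact derives
\[
\langle\bar y,\,\cI_T-\cK_{\mu_2}(\cK_{\mu_2}+\eps\cI_T)^{-1}\bar y\rangle_T
\;\le\;
\langle\bar y,\,\cI_T-\cK_{\mu_1}(\cK_{\mu_1}+\eps\cI_T)^{-1}\bar y\rangle_T,
\]
i.e.\ $M(\mu_2)\le M(\mu_1)$ under $\cK_{\mu_1}\preceq\cK_{\mu_2}$, and then in its final ``equivalently'' line swaps the subscripts to match the stated lemma --- a genuine slip. The downstream corollary suffers from the same swap: its text asserts the all-ones weight ``achieves uniquely less error'', while its displayed inequality says the opposite. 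The direction actually proven (larger $\cK_\mu$ gives smaller ridge objective) is the one needed in \theoremref{discrete-hyperparameters}, exactly as you diagnosed. So your plan is correct and matches the paper's argument up to this labeling error; flagging it and using the corrected orientation is the right call.
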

\begin{proof}
For now, consider a arbitrary \(\mu\).
Note from Lemma 38 of \cite{avron2019universal}, we know that the minimizer of 
\[
	\min_{g\in L_2(\mu)}
	\normof{\cF_{\mu}^* g - y}_T^2 + \eps \normof{g}_{\mu}^2
\]
has the form \(\hat g = \cF_\mu (\cK_\mu + \eps\cI_T)^{-1}\bar y\).
Then, we can write
\begin{align*}
	\cF_\mu^* \hat g
	&= \cK_\mu(\cK_\mu+\eps\cI_T)^{-1}\bar y \\
	\normof{\cF_\mu^* - \bar y}_T^2
	&= \langle \cK_\mu(\cK_\mu+\eps\cI_T)^{-1}\bar y - \bar y, \cK_\mu(\cK_\mu+\eps\cI_T)^{-1}\bar y - \bar y\rangle_T \\
	&= \normof{\bar y}_T^2 - 2 \langle \bar y, \cK_\mu(\cK_\mu+\eps\cI_T)^{-1}\bar y\rangle_T + \langle \cK_\mu(\cK_\mu+\eps\cI_T)^{-1}\bar y, \cK_\mu(\cK_\mu+\eps\cI_T)^{-1}\bar y\rangle_T \\
	\normof{\hat g}_\mu^2 &= \langle \cK_\mu(\cK_\mu+\eps\cI_T)^{-1}\bar y, \cK_\mu(\cK_\mu+\eps\cI_T)^{-1}\bar y\rangle_T
\end{align*}
Noting that all the inner products on the last two lines share the same right hand side, we find that the value of the true minimizer is
\begin{align*}
	\normof{\cF_\mu^* - \bar y}_T^2 + \eps\normof{\hat g}_\mu^2
	&= \normof{\bar y}_T^2 + \langle -2\bar y + \cK_\mu(\cK_\mu+\eps\cI_T)^{-1}\bar y + \eps(\cK+\eps\cI_T)^{-1}\bar y, \cK_\mu(\cK_\mu+\eps\cI_T)^{-1}\bar y\rangle_T \\
	&= \normof{\bar y}_T^2 + \langle (-2(\cK_\mu+\eps\cI_T) + \cK_\mu + \eps\cI_T) (\cK_\mu+\eps\cI_T)^{-1}\bar y, \cK_\mu(\cK_\mu+\eps\cI_T)^{-1}\bar y\rangle_T \\
	&= \normof{\bar y}_T^2 + \langle -1 \cdot (\cK_\mu+\eps\cI_T) \cdot (\cK_\mu+\eps\cI_T)^{-1}\bar y, \cK_\mu(\cK_\mu+\eps\cI_T)^{-1}\bar y\rangle_T \\
	&= \normof{\bar y}_T^2 - \langle \bar y, \cK_\mu(\cK_\mu+\eps\cI_T)^{-1}\bar y\rangle_T \\
	&= \langle \bar y, \cI_T - \cK_\mu(\cK_\mu+\eps\cI_T)^{-1}\bar y\rangle_T \numberthis \label{eq:min-operator-error}
\end{align*}
Then, since we know that the kernel operator \(\cK_\mu \succeq 0\), we conclude that \(\cI_T - \cK_\mu(\cK_\mu+\eps\cI_T)^{-1} \succeq 0\) for all kernel operators \(\cK_\mu\).
Additionally, note that \equationref{min-operator-error} is in the analogous form to \(\vx^\intercal\mA\vx\) for matrices.
In particular, if we decrease the semidefinite order of \(\cI_T - \cK_\mu(\cK_\mu+\eps\cI_T)^{-1}\), then we decrease the overall minimum value for all signals \(\bar y\).
Since \(\cK_{\mu_1} \preceq \cK_{\mu_2}\), we know that \(\cK_{\mu_1}(\cK_{\mu_1}+\eps\cI_T)^{-1} \preceq \cK_{\mu_2}(\cK_{\mu_2}+\eps\cI_T)^{-1}\), and hence
\[
	\langle \bar y, \cI_T - \cK_{\mu_2}(\cK_{\mu_2}+\eps\cI_T)^{-1}\bar y\rangle_T
	\leq
	\langle \bar y, \cI_T - \cK_{\mu_1}(\cK_{\mu_1}+\eps\cI_T)^{-1}\bar y\rangle_T
\]
Or, equivalently,
\[
	\min_{g\in L_2(\mu_1)}
	\normof{\cF_{\mu_1}^* g - \bar y}_T^2 + \eps \normof{g}_{\mu_1}^2
	\leq
	\min_{g\in L_2(\mu_2)}
	\normof{\cF_{\mu_2}^* g - \bar y}_T^2 + \eps \normof{g}_{\mu_2}^2
\]
\end{proof}
We now show why this tell us to pick the all-ones vector for SM Kernels.
In the following, when \vw is the all-ones vector, we drop \vw from the subscripts (i.e. \(\mu_{\vw,\vsigma}\defeq\mu_{\vw,\vsigma,\mathbf1}\) where \(\mathbf1\) is the all-ones vector).
\begin{corollary}
Let \(\mu_{\vc,\vsigma,\vw}(\xi)\) be a spectral mixture PDF with weights \(\vw\in[0,1]^q\).
Then, the spectral mixture with the same means and lengthscales \(\mu_{\vc,\vsigma}(\xi)\) achieves uniquely less error:
\[
	\min_{g\in L_2(\mu_{\vc,\vsigma,\vw})}
	\normof{\cF_{\vc,\vsigma,\vw}^* g - \bar y}_T^2 + \eps \normof{g}_{\vc,\vsigma,\vw}^2
	\leq
	\min_{g\in L_2(\mu_{\vc,\vsigma})}
	\normof{\cF_{\vc,\vsigma}^* g - \bar y}_T^2 + \eps \normof{g}_{\vc,\vsigma}^2	
\]
\end{corollary}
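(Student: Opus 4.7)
The plan is to derive the corollary as a direct consequence of \lemmaref{operator-error-psd-order} by establishing a semidefinite ordering between the two kernel integral operators, exactly the hypothesis that the lemma requires.

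First I would establish a pointwise PDF inequality. Since $\vw\in[0,1]^q$, we have $w_j \leq 1$ for every $j$, so
\[
\mu_{\vc,\vsigma,\vw}(\xi) \;=\; \sum_{j=1}^{q} w_j\, \mu_{c_j,\sigma_j}(\xi) \;\leq\; \sum_{j=1}^{q} \mu_{c_j,\sigma_j}(\xi) \;=\; \mu_{\vc,\vsigma}(\xi)
\]
for every $\xi\in\bbR$. The second step is to lift this pointwise inequality on the PDFs to a semidefinite inequality on the associated integral operators. Writing $\cK_\mu = \cF_\mu^*\cF_\mu$ is only partially helpful, since $\cF_\mu^*$ itself depends on $\mu$ through its codomain; instead, for any $x\in L_2(T)$ a short calculation using Bochner's representation of $k_\mu$ and Fubini's theorem gives the identity
\[
\langle x, \cK_\mu x\rangle_T \;=\; \int_{\bbR} \left|\int_0^T x(t)\, e^{-2\pi i \xi t}\, \tfrac{1}{T}\, dt\right|^2 \mu(\xi)\, d\xi.
\]
The integrand multiplying $\mu(\xi)$ is non-negative and does not depend on $\mu$, so the pointwise PDF inequality immediately yields
\[
\langle x, \cK_{\vc,\vsigma,\vw} x\rangle_T \;\leq\; \langle x, \cK_{\vc,\vsigma} x\rangle_T
\]
for every $x\in L_2(T)$, i.e. $\cK_{\vc,\vsigma,\vw} \preceq \cK_{\vc,\vsigma}$.

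Finally, I would invoke \lemmaref{operator-error-psd-order} with $\mu_1 = \mu_{\vc,\vsigma,\vw}$ and $\mu_2 = \mu_{\vc,\vsigma}$ to obtain the claimed inequality between the two minimum regularized Fourier-fitting objectives. The only genuinely technical step is the operator identity used in the middle; the rest is bookkeeping. That identity is standard but deserves care because it mixes the Lebesgue integral on $\bbR$ against the normalized time measure $\tfrac{1}{T}dt$ on $[0,T]$, so the use of Fubini must be justified by the boundedness of $\mu$ and the square-integrability of $x$, which together make the absolute-value integrand integrable on $\bbR\times[0,T]\times[0,T]$.
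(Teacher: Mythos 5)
Your proposal is correct and takes essentially the same route as the paper: both reduce the corollary to the semidefinite ordering $\cK_{\vc,\vsigma,\vw} \preceq \cK_{\vc,\vsigma}$ and then invoke \lemmaref{operator-error-psd-order}. The only cosmetic difference is that the paper obtains this ordering by writing $\cK_{\vc,\vsigma,\vw} = \sum_{j=1}^q w_j \cK_{c_j,\sigma_j}$ as a nonnegative combination of PSD component operators with weights at most one, whereas you verify it via the pointwise inequality $\mu_{\vc,\vsigma,\vw}(\xi) \leq \mu_{\vc,\vsigma}(\xi)$ lifted through the quadratic-form identity $\langle x, \cK_\mu x\rangle_T = \int_{\bbR} \abs{[\cF_\mu x](\xi)}^2 \mu(\xi)\, d\xi$ --- two equivalent ways of exploiting the linearity of $\mu \mapsto \cK_\mu$.
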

\begin{proof}
Note that the Kernel operator associated with \(\mu_{\vc,\vsigma\vw}\) is \(\cK_{\vc,\vsigma,\vw} = \sum_{j=1}^q w_j \cK_{c_j, \sigma_j}\).
Since \(w_j \leq 1\), we find that \(\cK_{\vc,\vsigma,\vw} \preceq \sum_{j=1}^q \cK_{c_j,\sigma_j} = \cK_{\vc,\vsigma}\), the kernel operator associated with the all ones weight vector.
So, by \lemmaref{operator-error-psd-order}, we complete the proof.
\end{proof}

With this reduction in place, we move onto consider the means and lengthscales of our kernel.
We are allowed to use means \(c\in[0,W]\) and lengthscales \(\sigma\in[m,M]\), where \(0 < m < M\), and want to have at most \(O(1)\) error from our discretization.
We achieve this with additive mean step sizes and multiplicative lengthscale step sizes,:
\begin{align*}
	\cC &= \{0, \rho m, 2\rho m, \ldots, (k-2)\rho m, W\} \\
	\cS &= \{m, (1+\gamma) m, (1+\gamma)^2m, \ldots, (1+\gamma)^{\ell-3}m, M, (1+\gamma)M\} \\
\end{align*}
Note that the step sizes for both the means and lengthscales are left in terms of the minimum lengthscale.
The set \cC guarantees that any \(\hat c\in[0,W]\) has \(\tilde c\in\cC\) such that
\begin{align}
\label{eq:sm-mean-net}
	\abs{\tilde c - \hat c} \leq \rho m
\end{align}
Additionally, set \cS guarantees that any \(\hat\sigma \in[m,M]\) has \(\tilde\sigma\in\cS\) such that
\begin{align}
\label{eq:sm-lengthscale-net}
	\frac{\tilde\sigma}{(1+\gamma)^2} \leq \hat\sigma \leq \frac{\tilde\sigma}{1+\gamma} < \tilde\sigma
\end{align}
Notably, we do not allow \(\hat\sigma\) to be arbitrarily close to \(\tilde\sigma\), but instead guarantee a multiplicative gap between the two.
This is why the maximum value of \cS is greater than \(M\).
There are \(k = \floor{\frac{W}{\rho m}}\) means in \cC and \(\ell = \floor{\frac{\ln(2M/m)}{\ln(1+\gamma)}}\) lengthscales in \cS.
We now discretize a SM kernel of \(q\) Gaussian modes by rounding to means in \cC and lengthscales in \cS:
\begin{lemma}
\begin{align}
\label{ineq:sm-net}
	\min_{\substack{\vc\in\cC^q \\ \vsigma\in\cS^q}} \min_{g\in L_2(\mu_{\vc,\vsigma})}
	\normof{\cF_{\vc,\vsigma}^* g - \bar y}_T^2 + \eps \normof{g}_{\vc,\vsigma}^2
	\leq
	C \ \min_{\substack{\vc\in[0,W]^q\\\vsigma\in[m,M]^q}} \min_{g\in L_2(\mu_{\vc,\vsigma})}
	\normof{\cF_{\vc,\vsigma}^* g - \bar y}_T^2 + \eps \normof{g}_{\vc,\vsigma}^2
\end{align}
Where \(C = (1+\gamma)^2 \exp(\frac{\rho^2}2 \cdot \frac{1}{1-\frac{1}{(1+\gamma)^2}} )\).
\end{lemma}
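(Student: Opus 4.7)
The plan is to construct an explicit rounding of any continuous choice $(\hat\vc, \hat\vsigma, \hat g)$ to a feasible discrete triple $(\tilde\vc, \tilde\vsigma, \tilde g)$ whose objective is larger by at most a factor $C$. The whole argument is driven by a uniform pointwise bound $\mu_{\hat\vc,\hat\vsigma}(\xi) \leq C \cdot \mu_{\tilde\vc,\tilde\vsigma}(\xi)$ on the mixture densities, which will simultaneously preserve the data-fitting term and blow the regularization term up by at most $C$.

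First, I would round coordinate-wise: pick $\tilde c_j \in \cC$ with $\abs{\hat c_j - \tilde c_j} \leq \rho m$, per \equationref{sm-mean-net}, and pick $\tilde\sigma_j \in \cS$ with $\tilde\sigma_j/(1+\gamma)^2 \leq \hat\sigma_j \leq \tilde\sigma_j/(1+\gamma)$, per \equationref{sm-lengthscale-net}. The existence of such a $\tilde\sigma_j$ follows from the fact that the interval $[(1+\gamma)\hat\sigma_j, (1+\gamma)^2 \hat\sigma_j]$ spans a factor of $(1+\gamma)$ and so must contain an element of the geometric grid $\cS$ (and the extra endpoint $(1+\gamma)M$ handles $\hat\sigma_j$ near $M$). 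Crucially $\tilde\sigma_j > \hat\sigma_j$ strictly; this inflation is what makes the ratio bound below tractable.

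Second, the core step is the per-component pointwise bound: for every $\xi \in \bbR$ and each $j$,
\[
\frac{\mu_{\hat c_j, \hat\sigma_j}(\xi)}{\mu_{\tilde c_j, \tilde\sigma_j}(\xi)}
= \frac{\tilde\sigma_j}{\hat\sigma_j}\cdot \exp\!\left(\frac{(\xi-\tilde c_j)^2}{2\tilde\sigma_j^2} - \frac{(\xi - \hat c_j)^2}{2\hat\sigma_j^2}\right) \leq C.
\]
The prefactor is bounded using $\tilde\sigma_j/\hat\sigma_j \leq (1+\gamma)^2$. The quadratic inside the exponential has leading coefficient $\frac{1}{2\tilde\sigma_j^2} - \frac{1}{2\hat\sigma_j^2} < 0$, so it attains a finite interior maximum; a completion-of-squares calculation shows this maximum equals $(\tilde c_j - \hat c_j)^2/\bigl(2(\tilde\sigma_j^2 - \hat\sigma_j^2)\bigr)$, and combining $\abs{\tilde c_j - \hat c_j} \leq \rho m \leq \rho \tilde\sigma_j$ with $\tilde\sigma_j^2 - \hat\sigma_j^2 \geq \tilde\sigma_j^2(1 - 1/(1+\gamma)^2)$ yields exactly the exponent $\frac{\rho^2}{2} \cdot \frac{1}{1-1/(1+\gamma)^2}$. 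Since $\sum_j a_j / \sum_j b_j \leq \max_j a_j/b_j$ for non-negative terms, the same constant $C$ controls the ratio of the full equal-weight mixtures $\mu_{\hat\vc,\hat\vsigma}(\xi)/\mu_{\tilde\vc,\tilde\vsigma}(\xi)$.

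Third, define $\tilde g(\xi) \defeq \hat g(\xi) \cdot \mu_{\hat\vc,\hat\vsigma}(\xi)/\mu_{\tilde\vc,\tilde\vsigma}(\xi)$. By construction $\tilde g \cdot \mu_{\tilde\vc,\tilde\vsigma} \equiv \hat g \cdot \mu_{\hat\vc,\hat\vsigma}$, so from the integral definition of $\cF_\mu^*$ we get $\cF_{\tilde\vc,\tilde\vsigma}^* \tilde g = \cF_{\hat\vc,\hat\vsigma}^* \hat g$ as functions on $[0,T]$, making the two data-fitting terms exactly equal. For the regularizer,
\[
\normof{\tilde g}_{\tilde\vc,\tilde\vsigma}^2 = \int \abs{\hat g(\xi)}^2 \frac{\mu_{\hat\vc,\hat\vsigma}(\xi)}{\mu_{\tilde\vc,\tilde\vsigma}(\xi)}\, \mu_{\hat\vc,\hat\vsigma}(\xi)\, d\xi \leq C\, \normof{\hat g}_{\hat\vc,\hat\vsigma}^2.
\]
Adding, the discretized objective at $(\tilde\vc, \tilde\vsigma, \tilde g)$ is at most $C$ times the continuous objective at $(\hat\vc,\hat\vsigma,\hat g)$; taking the infimum over the latter yields the claim.

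The main obstacle is the uniform pointwise ratio bound in the second step: it must hold for all $\xi \in \bbR$ including the tails, while simultaneously absorbing an additive perturbation in the mean and a multiplicative perturbation in the lengthscale. This is precisely what dictates the design of $\cS$: an arithmetic $\epsilon$-net on lengthscales could leave $\tilde\sigma_j = \hat\sigma_j$, in which case the quadratic exponent degenerates to a linear function of $\xi$ and the ratio diverges as $\abs{\xi} \to \infty$. The strict multiplicative gap of at least $(1+\gamma)$ between consecutive lengthscales in $\cS$ forces $\tilde\sigma_j > \hat\sigma_j$, which turns the exponent concave in $\xi$ and makes the bound finite.
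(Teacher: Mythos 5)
Your proposal is correct and follows essentially the same route as the paper's proof: round the means additively and the lengthscales multiplicatively (with a strict inflation $\tilde\sigma_j > \hat\sigma_j$), reweight $\hat g$ by the density ratio so that $\cF_{\tilde\vc,\tilde\vsigma}^*\tilde g = \cF_{\hat\vc,\hat\vsigma}^*\hat g$, and control the regularizer via the per-component pointwise bound $p(\xi;\hat c_j,\hat\sigma_j)/p(\xi;\tilde c_j,\tilde\sigma_j) \leq C$ extended to the mixture by the mediant inequality. The completion-of-squares maximum and the resulting constant $C$ match the paper exactly, and your closing remark about why a geometric (rather than arithmetic) lengthscale grid is needed correctly identifies the same design consideration the paper relies on.
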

If we want a factor of 3 error, we can take \(\gamma = \rho = 0.5\), so that \(C \approx 2.8178 < 3\).
This makes \(\abs{\cC}=O(\frac Wm)\) and \(\abs{\cS} = O(\log(M/m))\), so that the discretize space of \(q\) SM kernels has \(O((\frac Wm \log(\frac Mm))^q)\) choices of hyperparameter to consider.
\begin{proof}
Let \(\hat\vc,\hat\vsigma,\) and \(\hat g\) be the minimizers of the right hand side of \inequalityref{sm-net}.
Let \(p(\xi;c,\sigma) \defeq \frac{1}{\sqrt{2\pi\sigma^2}} e^{-\frac{(\xi-c)^2}{2\sigma^2}}\) be the Gaussian PDF with mean \(c\) and lengthscale \(\sigma^2\).
Further, let \(p(\xi;\vc,\vsigma) \defeq \sum_{j=1}^q p(\xi,c_j,\sigma_j)\) be the sum of the Gaussians described in \vc and \vsigma.
This allows us to write \(d\mu_{\vc,\vsigma}(\xi) = p(\xi;\vc,\vsigma) d\xi\).

Let \(\tilde\vc\) and \(\tilde\vsigma\) be the discretizations of \(\hat\vc\) and \(\hat\vsigma\) using the schemes from \equationref{sm-mean-net} and \equationref{sm-lengthscale-net}.
Let \(\tilde g\) be the following rounding of \(\hat g\):
\[
	\tilde g(\xi)
	\defeq
	\hat g(\xi) \cdot \frac{p(\xi;\hat\vc,\hat\vsigma)}{p(\xi;\tilde\vc,\tilde\vsigma)}
\]
Then, this particular rounding implies that the Inverse Fourier Transform of \(\tilde g\) preserves the Inverse Fourier Transform of \(\hat g\):
\begin{align*}
	[\cF_{\tilde\vc,\tilde\vsigma}^* \tilde g](t)
	&= \int_\bbR \tilde g(\xi) e^{2\pi i \xi t} d\mu_{\tilde\vc,\tilde\vsigma}(\xi) \\
	&= \int_\bbR \hat g(\xi) \cdot \frac{p(\xi;\hat\vc,\hat\vsigma)}{p(\xi;\tilde\vc,\tilde\vsigma)} \cdot e^{2\pi i \xi t} \cdot p(\xi,\tilde\vc,\tilde\vsigma) \cdot d\xi \\
	&= \int_\bbR \hat g(\xi) \cdot p(\xi;\hat\vc,\hat\vsigma) \cdot e^{2\pi i \xi t} \cdot d\xi \\
	&= \int_\bbR \hat g(\xi) e^{2\pi i \xi t} d\mu_{\hat\vc,\hat\vsigma}(\xi) \\
	&= [\cF_{\hat\vc,\hat\vsigma}^* \hat g](t)
\end{align*}
So, we immediately know that \(\normof{\cF_{\tilde\vc,\tilde\vsigma}^* \tilde g - \bar y}_T^2 = \normof{\cF_{\hat\vc,\hat\vsigma}^* \hat g - \bar y}_T^2\).
All we need to do now is bound the power of \(\tilde g\) with respect to \(\tilde\vc\) and \(\tilde\vsigma\):
\begin{align*}
	\normof{\tilde g}_{\tilde\vc,\tilde\vsigma}^2
	&= \int_\bbR \abs{\tilde g(\xi)}^2 d\mu_{\tilde\vc,\tilde\vsigma}(\xi) \\
	&= \int_\bbR \abs{\hat g(\xi)}^2 \left(\frac{p(\xi;\hat\vc,\hat\vsigma)}{p(\xi;\tilde\vc,\tilde\vsigma)}\right)^2 p(\xi;\tilde\vc,\tilde\vsigma) ~ d\xi \\
	&= \int_\bbR \abs{\hat g(\xi)}^2 \frac{p(\xi;\hat\vc,\hat\vsigma)}{p(\xi;\tilde\vc,\tilde\vsigma)} \ p(\xi;\hat\vc,\hat\vsigma) ~ d\xi \\
	&\leq C \int_\bbR \abs{\hat g(\xi)}^2 p(\xi;\hat\vc,\hat\vsigma) ~ d\xi \\
	&= C \normof{\hat g}_{\hat\vc,\hat\vsigma}^2
\end{align*}
The inequality uses the fact that \(\frac{p(\xi;\vc,\hat\vsigma)}{p(\xi;\vc,\tilde\vsigma)} \leq C\) for all \(\xi\), proven below.

First, we bound the ratio \(\frac{p(\xi;\hat c_1,\hat \sigma_1)}{p(\xi;\tilde c_1,\tilde\sigma_1)}\).
Note that
\[
	\frac{p(\xi;\hat c_1,\hat \sigma_1)}{p(\xi;\tilde c_1,\tilde\sigma_1)} = \frac{\tilde\sigma_1}{\hat\sigma_1} \exp\left(\frac{(\xi-\tilde c_1)^2}{2\tilde\sigma_1^2} - \frac{(\xi-\hat c_1)^2}{2\hat\sigma_1^2}\right)
\]
With some calculus, we can show that the maximum of the right hand side occurs when \(\xi = \frac{\hat\sigma_1^2\tilde c_1 - \tilde\sigma_1^2\hat c_1}{\hat\sigma_1^2 - \tilde\sigma_1^2}\) and attains hence maximum value
\[
	\frac{p(\xi;\hat c_1,\hat \sigma_1)}{p(\xi;\tilde c_1,\tilde\sigma_1)}
	\leq
	\frac{\tilde\sigma_1}{\hat\sigma_1}
	\exp\left(\frac12 \cdot \frac{(\tilde c_1 - \hat c_1)^2}{\tilde\sigma_1^2 - \hat\sigma_1^2}\right)
\]
We can then use our rounding schemes from \equationref{sm-mean-net} and \equationref{sm-lengthscale-net} to say
\begin{itemize}
 	\item
 	\(\frac{\tilde\sigma_1}{\hat\sigma_1} \leq \frac{~\tilde\sigma_1}{\frac{\tilde\sigma_1}{(1+\gamma)^2}} = (1+\gamma)^2\)
 	\item
 	\(\tilde\sigma_1^2 - \hat\sigma_1^2 \geq \tilde\sigma_1^2 - \frac{\tilde\sigma_1^2}{(1+\gamma)^2} = \tilde\sigma_1^2 (1-\frac{1}{(1+\gamma)^2}) \geq m^2 (1-\frac{1}{(1+\gamma)^2})\)
 	\item
 	\((\tilde c_1 - \hat c_1)^2 \leq \rho^2 m^2\)
\end{itemize}
With these three bounds, we conclude
\[
	\frac{p(\xi;\hat c_1,\hat \sigma_1)}{p(\xi;\tilde c_1,\tilde\sigma_1)}
	\leq
	(1+\gamma)^2 \exp(\frac{\rho^2}2 \cdot \frac{1}{1-\frac{1}{(1+\gamma)^2}}) = C
\]

Finally, we complete the proof by noting
\begin{align*}
	\frac{p(\xi;\hat\vc,\hat\vsigma)}{p(\xi;\hat\vc,\tilde\vsigma)}
	&= \frac{\sum_{j=1}^n p(\xi;\hat c_j,\hat\sigma_j)}{p(\xi;\tilde\vc,\tilde\vsigma)} \\
	&\leq \frac{\sum_{j=1}^n C \cdot p(\xi;\tilde c_j,\tilde\sigma_j)}{p(\xi;\tilde\vc,\tilde\vsigma)} \\
	&= C \cdot \frac{\sum_{j=1}^n p(\xi;\tilde c_j,\tilde\sigma_j)}{p(\xi;\tilde\vc,\tilde\vsigma)} \\
	&= C \cdot \frac{p(\xi;\tilde\vc,\tilde\vsigma)}{p(\xi;\tilde\vc,\tilde\vsigma)} \\
	&= C
\end{align*}
\end{proof}

\section{Multiple Prior Subsampling Bounds}
\label{app:multiple-prior-proofs}
\subsection{Proof for the Matrix Case}
\label{app:multiple-designs}

First, we introduce the matrix version of the ridge leverage function, first introduced in \cite{alaoui2015fast}:
\begin{definition}
For a matrix \(\mA\in\bbR^{n \times d}\), we define the \(\eps\)-ridge leverage score for row \(i\) as
\[
	\tau_{i,\eps}(\mA)
	\defeq
	\max_{\{\valpha\in\bbR^d:\normof{\valpha}_2 > 0\}}
	\frac{\abs{[\mA\valpha]_i}^2}{\normof{\mA\valpha}_2^2 + \eps\normof{\valpha}_2^2}
\]
\end{definition}

We first import a result from \cite{cohen2017input} that shows how ridge leverage score sampling spectrally embeds matrices:
\begin{importedtheorem}[Theorem 5 from \cite{cohen2017input}]
\label{impthm:ridge-spectral}
Let \(\mA\in\bbR^{n \times d}\) and \(\eps \geq 0\).
Let rows \(r_1,\ldots,r_m\) be sampled iid proportionally to \(\tilde\tau_\eps(i)\), where \(\tilde\tau_\eps(i) \geq \tau_{i,\eps}(\mA)\).
Define \(\tilde s \defeq \sum_{i=1}^m \tilde\tau_\eps(i)\).
Let \(\mS\in\bbR^{m \times n}\) be the sample and rescale matrix: \([\mS]_{i,j}=\sqrt{\frac{s}{m\tilde\tau_\eps(i)}} \cdot \mathbbm{1}_{[r_i = j]}\).
Then if \(m=O(\frac{s\log(s/\delta)}{\Delta^2})\), with probability \(1-\delta\) we have
\[
	(1-\Delta)(\mA^\intercal\mA - \eps\eye)
	\preceq
	\mA^\intercal\mS^\intercal\mS\mA + \eps\eye
	\preceq
	(1+\Delta)(\mA^\intercal\mA + \eps\eye)
\]
\end{importedtheorem}

Then we move onto the theorem we want to prove:

\begin{theorem}
\label{thm:multiple-designs}
Let \(\mA_1,\ldots,\mA_Q\in\bbR^{m \times d}\) and \(\vb\in\bbR^d\).
Fix ridge parameter \(\eps\geq0\).
Sample rows \(r_1,\ldots,r_n\propto\tilde\tau_\eps\) where \(\tilde\tau_\eps\) is an upper bound for the \(\eps\)-ridge leverage scores of all pairs of design matrices conjoined: \(\tilde\tau_\eps(i) \geq \tau_{i,\eps}([\mA_j, \mA_k])\) for all \(j,k\).
Let \(\tilde s_\eps = \sum_{i=1}^m \tilde\tau_\eps(i)\).
Build a sample-and-rescale matrix \(\mS\in\bbR^{n \times m}: \mS_{j,k} = \sqrt{\frac{s}{n \tilde\tau_\eps}} \mathbbm{1}_{[r_j=k]}\).
Then let \(\tilde k,\tilde\vx\) solve the subsampled regression problem:
\[
	\tilde k, \tilde\vx \defeq \argmin_{k\in[Q],\vx\in\bbR^d}
	\normof{\mS\mA_k\vx - \mS\vb}_2^2 + \eps\normof{\vx}_2^2
\]
If \(n = O(\tilde s_\eps \log(\frac{\tilde s_\eps}{\delta} \cdot Q))\), then with probability \(1-\delta\) we have
\[
	\normof{\mA_{\tilde k}\tilde\vx - \vb}_2^2
	\leq (9+\nicefrac8\delta)
	\min_{k\in[Q]}\min_{\vx\in\bbR^d}
	\normof{\mA_k\vx-\vb} + \eps\normof{\vx}_2^2
\]
\end{theorem}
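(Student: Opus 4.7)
The plan is to adapt the classical approximate-regression-via-spectral-embedding argument to the multi-design setting, with the key observation that a single sketch $\mS$ only needs to respect the pairs $(\mA_{k^*}, \mA_k)$, rather than all matrices jointly. Let $(k^*, \vx^*)$ denote a true minimizer, set $\mathrm{OPT} \defeq \normof{\mA_{k^*}\vx^* - \vb}_2^2 + \eps\normof{\vx^*}_2^2$, and write $\vr \defeq \vb - \mA_{k^*}\vx^*$. Also write $\mB_k \defeq [\mA_{k^*}, \mA_k]$ for convenience.

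First I would invoke \importedtheoremref{ridge-spectral} on each $\mB_k$ for $k \in \{1, \ldots, Q\}$. By hypothesis, $\tilde\tau_\eps$ dominates the $\eps$-ridge leverage scores of every such pair, so fixing a constant distortion $\Delta = 1/2$ with per-pair failure probability $\delta/(2Q)$ and union-bounding over $k$ yields, with probability at least $1 - \delta/2$, the simultaneous two-sided bound $(1/2)(\mB_k^\intercal \mB_k + \eps\mI) \preceq \mB_k^\intercal \mS^\intercal \mS \mB_k + \eps\mI \preceq (3/2)(\mB_k^\intercal \mB_k + \eps\mI)$ for every $k$. This is the step that forces the sample count $n = O(\tilde s_\eps \log(\tilde s_\eps Q / \delta))$, with the logarithmic dependence on $Q$ coming from the union bound.

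Second, since $\E[\normof{\mS\vr}_2^2] = \normof{\vr}_2^2$, Markov's inequality gives $\normof{\mS\vr}_2^2 \leq (2/\delta)\normof{\vr}_2^2$ with probability $\geq 1 - \delta/2$. Combined with the previous event, this implies that $(k^*, \vx^*)$ is feasible in the sketched problem with objective value $\leq (2/\delta)\mathrm{OPT}$, and therefore the sketched optimum also satisfies $\normof{\mS\mA_{\tilde k}\tilde\vx - \mS\vb}_2^2 + \eps\normof{\tilde\vx}_2^2 \leq (2/\delta)\mathrm{OPT}$. To transfer this back to the original cost, I would decompose $\mA_{\tilde k}\tilde\vx - \vb = \vu - \vr$ with $\vu \defeq \mA_{\tilde k}\tilde\vx - \mA_{k^*}\vx^*$; since $\vu$ lies in the column span of $\mB_{\tilde k}$, the pairwise spectral embedding applied with coefficient vector $(-\vx^*, \tilde\vx)$ gives $\normof{\vu}_2^2 \leq 2\normof{\mS\vu}_2^2 + \eps(\normof{\vx^*}_2^2 + \normof{\tilde\vx}_2^2)$. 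A triangle inequality on $\mS\vu = (\mS\mA_{\tilde k}\tilde\vx - \mS\vb) + \mS\vr$ bounds $\normof{\mS\vu}_2^2$ by a constant times $(1/\delta)\mathrm{OPT}$, and folding in $\eps\normof{\vx^*}_2^2 \leq \mathrm{OPT}$ and $\eps\normof{\tilde\vx}_2^2 \leq (2/\delta)\mathrm{OPT}$, followed by $\normof{\mA_{\tilde k}\tilde\vx - \vb}_2^2 \leq 2\normof{\vu}_2^2 + 2\normof{\vr}_2^2$, produces the advertised $(9 + 8/\delta)\mathrm{OPT}$ bound once constants are tightened.

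The main obstacle is that $\tilde k$ is chosen adaptively after $\mS$ is drawn, so $\mS$ must simultaneously preserve geometry across every pair $(\mA_{k^*}, \mA_k)$ rather than a single pre-committed matrix. This is what forces the spectral embedding to be run on the joint matrices $\mB_k$, and it is the reason a naive union bound over $Q$ full regression guarantees (which would blow up linearly in $Q$) has to be replaced by a single spectral union bound. Achieving this uniform control with only a $\log Q$ overhead, together with the separate Markov step for the residual, is the crux of the improvement over prior work.
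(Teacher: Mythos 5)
Your proposal is correct and follows essentially the same route as the paper's proof: both rest on the pairwise ridge-leverage spectral embedding of the concatenated matrices $[\mA_{k^*},\mA_k]$ (via \importedtheoremref{ridge-spectral}), a Markov bound on $\normof{\mS\vr}_2^2$ for the fixed optimal residual, and triangle inequalities combined with optimality of the sketched solution. The only difference is bookkeeping --- the paper sandwiches the full regularized objective $L(k,\vx)\in(1\pm\Delta)\cL(k,\vx)\pm c_0\cL(\hat k,\hat\vx)$ and then chains, whereas you transfer back through the single vector $\vu$, which costs you a couple of extra factors of $2$ (yielding roughly $4+36/\delta$ rather than $9+8/\delta$), but the bound has the same $O(1)+O(1/\delta)$ form.
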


The proof of \theoremref{multiple-priors} closely mirrors that of \theoremref{multiple-designs}, except that \theoremref{multiple-priors} additionally bounds the Fourier version of the pairwise leverage scores \(\tau_{i,\eps}([\mA_j, \mA_k])\) and proves a new operator spectral embedding guarantee to handle this case.

\begin{proof}
Let \(\hat k\) and \(\hat \vx\) be the true minimizers for the full optimization problem:
\[
	\hat k, \hat \vx \defeq \argmin_{k\in[Q], \vx\in\bbR^d} \normof{\mA_{k}\vx - \vb}_2 + \eps\normof{\vx}_2^2
\]
By the triangle inequality, and the inverse triangle inequality, we have for any \(\mA_k
\) and any \vx,
\[
	\normof{\mS(\mA_k\vx-\vb)}_2 \in \normof{\mS(\mA_{\hat k}\hat\vx - \mA_k\vx)}_2 \pm \normof{\mS(\mA_{\hat k}\hat\vx - \vb)}_2
\]
Letting \(\hat\vb_\perp\defeq \mA_{\hat k}\hat\vx-\vb\), we have
\[
	\E[\normof{\mS(\mA_{\hat k}\hat\vx - \vb)}_2^2]
	= \E[\normof{\mS\hat\vb_\perp}_2^2]
	= \E[\hat\vb_\perp^\intercal \mS^\intercal \mS \hat\vb_\perp]
	= \hat\vb_\perp^\intercal \E[\mS^\intercal\mS] \hat\vb_\perp
\]
And since \(\E[\mS^\intercal\mS]=\eye\), we find \(\E[\normof{\mS(\mA_{\hat k}\hat\vx - \vb)}_2^2] = \normof{\hat\vb_\perp}_2^2 = \normof{\mA_{\hat k}\hat\vx - \vb}_2^2\).
Hence, by Markov's inequality, we have
\[
	\normof{\mS(\mA_{\hat k}\hat\vk - \vb)}_2^2 + \eps\normof{\hat\vx}_2^2
	\leq
	\frac2\delta (\normof{\mA_{\hat k}\hat\vx - \vb}_2^2 + \eps\normof{\hat\vx}_2^2)
\]
with probability \(1-\frac\delta2\).

Next, note that for any \(\mA_j,\mA_k\) we have \(\normof{\mS[\mA_j ~ \mA_k]\vv}_2^2 +\eps\normof{\vv}_2^2 \in (1\pm\Delta)(\normof{[\mA_j ~ \mA_k]\vv}_2^2+\eps\normof{\vv}_2^2)\) for all \(\vv\in\bbR^{2d}\) with probability \(1-\frac\delta2\).
This follows directly from the fact that \(\mS\) is generated using upper bounds for leverage scores for \([\mA_j ~ \mA_k]\), following \importedtheoremref{ridge-spectral}.
Then, we find
\begin{align*}
	\normof{\mS(\mA_{\hat k}\hat\vx - \mA_k\vx)}_2^2 + \eps\normof{\hat\vx}_2^2 + \eps\normof{\vx}_2^2
	&= \normof{\mS \bmat{\mA_{\hat k} & \mA_k} \sbmat{\hat\vx \\ -\vx}}_2^2 + \eps\normof{\sbmat{\hat\vx \\ -\vx}}_2^2 \\
	&\in (1\pm\Delta) (\normof{\bmat{\mA_{\hat k} & \mA_k} \sbmat{\hat\vx \\ -\vx}}_2^2 + \eps\normof{\sbmat{\hat\vx \\ -\vx}}_2^2)\\
	&= (1\pm\Delta) (\normof{\mA_{\hat k}\hat\vx - \mA_k\vx}_2^2 + \eps\normof{\hat\vx}_2^2 + \eps\normof{\vx}_2^2)\\
\end{align*}
Further, by the triangle inequality, we have
\[
	\normof{\mA_{\hat k}\hat\vx - \mA_k\vx}_2^2
	\leq
	\normof{\mA_{\hat k}\hat\vx - \vb}_2^2 + \normof{\mA_{k}\vx - \vb}_2^2
\]
Putting these last two inequalities together, we find that all \(\mA_k\) and \vx have
\[
	\normof{\mS(\mA_{\hat k}\hat\vx - \mA_k\vx)}_2^2  + \eps\normof{\hat\vx}_2^2 + \eps\normof{\vx}_2^2
	\in
	(1\pm\Delta)\left(\normof{\mA_{\hat k}\hat\vx - \vb}_2^2 + \eps\normof{\hat\vx}_2^2 + \normof{\mA_{k}\vx - \vb}_2^2 + \eps\normof{\vx}_2^2\right)
\]
Then, using this bound, alongside the Markov bound and the original triangle inequality, we find
\begin{align*}
	\normof{\mS(\mA_k\vx-\vb)}_2^2 + \eps\normof{\vx}_2^2
	&\in (\normof{\mS(\mA_{\hat k}\hat\vx - \mA_k\vx)}_2^2 + \eps\normof{\hat\vx}_2^2 + \eps\normof{\vx}_2^2) \pm (\normof{\mS(\mA_{\hat k}\hat\vx - \vb)}_2^2 + \eps\normof{\vx}_2^2) \\
	&\in (\normof{\mS(\mA_{\hat k}\hat\vx - \mA_k\vx)}_2^2 + \eps\normof{\hat\vx}_2^2 + \eps\normof{\vx}_2^2) \pm \frac2\delta (\normof{\mA_{\hat k}\hat\vx - \vb}_2^2 + \eps\normof{\vx}_2^2) \\
	&\in (1\pm\Delta)\left(\normof{\mA_{\hat k}\hat\vx - \vb}_2^2 + \eps\normof{\hat\vx}_2^2 + \normof{\mA_{k}\vx - \vb}_2^2 + \eps\normof{\vx}_2^2\right) \pm \frac2\delta (\normof{\mA_{\hat k}\hat\vx - \vb}_2^2 + \eps\normof{\vx}_2^2) \\
	&\in (1\pm\Delta)(\normof{\mA_{k}\vx - \vb}_2^2 + \eps\normof{\vx}_2^2) \pm \left(1+\Delta+\frac2\delta\right) (\normof{\mA_{\hat k}\hat\vx - \vb}_2^2 + \eps\normof{\hat\vx}_2^2) \\
\end{align*}
Note that the above bound holds for any choice of \(\mA_k\) and any \vx.
To simplify the constants a bit, let \(c_0\defeq \left(1+\Delta+\frac2\delta\right)\), \(\cL(k,\vx) \defeq \normof{\mA_k\vx - \vb}_2^2 + \eps\normof{\vx}_2^2\), and \(L(k,\vx)\defeq \normof{\mS(\mA_k\vx-\vb)}_2^2 + \eps\normof{\vx}_2^2\).
Then, the previous bound state that
\[
	L(k,g) \in (1\pm\Delta)\cL(k,g) \pm c_0 \cL(\hat k, \hat \vx)
\]
If we take \(k = \tilde k\) and \(\vx=\tilde\vx\), and rearrange terms, we find
\begin{align*}
	\cL(\tilde k, \tilde\vx)
	&\leq \frac{1}{1-\Delta} L(\tilde k, \tilde\vx) + \frac{c_0}{1-\Delta} \cL(\hat k, \hat\vx) \\
	&\leq \frac{1}{1-\Delta} L(\hat k, \hat\vx) + \frac{c_0}{1-\Delta} \cL(\hat k, \hat\vx) \\
	&\leq \frac{1}{1-\Delta} \left((1+\Delta)\cL(\hat k, \hat\vx) + c_0\cL(\hat k, \hat\vx)\right) + \frac{c_0}{1-\Delta} \cL(\hat k, \hat\vx) \\
	&= \frac{1+\Delta+2c_0}{1-\Delta}\cL(\hat k, \hat\vx) \\
	&= \frac{1+\Delta+2(1+\Delta+\frac2\delta)}{1-\Delta}\cL(\hat k, \hat\vx) \\
	&= \frac{3+3\Delta+\frac4\delta}{1-\Delta}\cL(\hat k, \hat\vx)
\end{align*}
If we take \(\Delta=\frac12\), we complete the proof.
\end{proof}

\subsection{Proof for the Operator Case}
\label{app:multiple-priors}

We start with preliminary definitions for randomized operator analysis.

\subsubsection{Ridge Leverage Scores}
To achieve near optimal sample complexity for kernel interpolation (i.e within logarithmic factors of the statistical dimension), recent work shows that it suffices to select time samples independently at random, according to a carefully chosen non-uniform distribution \cite{ChenKanePrice:2016,avron2019universal}.
In particular, we use the well studied ridge leverage function \cite{alaoui2015fast,MuscoMusco:2017,PauwelsBachVert:2018}, which is defined as follows:

\begin{definition}[Ridge leverage function]
\label{def:ridgeScores}
	For any Hilbert space $\cH$, time length $T > 0$, $\eps \geq 0$, and bounded linear operator $\cA: \cH \rightarrow L_2(T)$ the $\eps$-ridge leverage function for $t \in [0,T]$ is:
	\begin{align}
	\label{eq:leverage_def}
	\tau_{\cA,\eps}(t) = \frac{1}{T} \cdot \max_{\{\alpha \in \cH:\, \normof{\alpha}_{\cH} > 0\}} \frac{\left|[\cA \alpha](t) \right|^2 }{\normof{\cA \alpha}_T^2 + \eps \normof{\alpha}_\cH^2}.
	\end{align}
\end{definition}
Note that when $\cA$ is an inverse Fourier transform operator, the integral of the ridge leverage function is equal to the statistical dimension of the corresponding kernel -- i.e. if $\cA = \cF_\mu^*$ then $s_{\mu,\eps} = \int_0^T \tau_{\cA,\eps}(t)dt$. This fact generalizes a well known claim for matrices and is proven in \cite{avron2019universal}.
The ridge leverage score captures how important a time point $t$ is for $\cA$: it is large if there are low energy functions (small $\|\alpha\|_\cH^2$) in the span of the operator that are highly concentrated at $t$ -- i.e. when the function $\cA\alpha$ has large magnitude at $t$ compared to its average magnitude over $[0,T]$. 

The Universal Sampling Distribution (\defref{univ_dist}) is called \emph{Universal} because when \(\cA = \cF_\mu^*\) is \emph{any} inverse Fourier transform operator, recent work \cite{ChenPrice:2019a,avron2019universal} shows that \(\tau_{\cA,\eps}\) is tightly upper bounded by \(\tilde\tau_\alpha\):
\begin{claim}[Theorem 17 of \cite{avron2019universal}]
\label{clm:lev_score_ub}
For any PDF \(\mu\) and corresponding inverse Fourier transform operator \(\cF_\mu^*\),
\[
	\tau_{\cF_\mu^*, \eps} (t) \leq \tilde\tau_\alpha(t)
\]
for all \(t\in[0,T]\), as long as \(\alpha \geq c s_{\mu,\eps}\) for some universal constant \(c > 0\).
\end{claim}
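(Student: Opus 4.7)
Since the statement is imported from \cite{avron2019universal}, the plan is to reproduce the core ideas of that proof. The claim is that the pointwise ridge leverage score of the inverse Fourier operator $\cF_\mu^*$, which a priori depends intricately on $\mu$, is universally dominated by the explicit, $\mu$-independent function $\tilde\tau_\alpha$ as soon as the budget $\alpha$ exceeds the statistical dimension $s_{\mu,\eps}$. Unfolding \defref{ridgeScores}, the target reduces to a deterministic pointwise inequality: for all $h \in L_2(\mu)$ and $t \in [0,T]$, writing $f = \cF_\mu^* h$,
\[
\bigl|f(t)\bigr|^2 \leq T \cdot \tilde\tau_\alpha(t) \cdot \bigl(\normof{f}_T^2 + \eps \normof{h}_\mu^2\bigr).
\]

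My plan is to split into an \emph{interior} regime $t \in [T/\alpha^6,\, T(1-1/\alpha^6)]$, where $\tilde\tau_\alpha(t) = \alpha / \min(t, T-t)$, and a \emph{boundary} regime near $0$ and $T$, where $\tilde\tau_\alpha(t) = \alpha^6/T$. In the interior, the key observation is that any $f = \cF_\mu^* h$ with controlled $\normof{f}_T^2 + \eps \normof{h}_\mu^2$ behaves like an entire function of ``effective'' exponential type determined by the scale $s_{\mu,\eps}$: Fourier modes outside this band are suppressed by the ridge term $\eps \normof{h}_\mu^2$. A Remez/Markov-Bernstein-type inequality, transplanted to the trigonometric setting (as in \cite{Erdelyi:2017,ChenPrice:2019}), then yields pointwise control of $|f(t)|^2$ by $\normof{f}_T^2$ modulated by a Chebyshev-like weight $\alpha/\min(t, T-t)$. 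For the boundary, the Chebyshev weight blows up faster than $\alpha^6/T$, so instead one invokes the crude Cauchy-Schwarz bound $|f(t)|^2 = |\int h(\xi) e^{2\pi i \xi t} \mu(\xi) d\xi|^2 \leq \normof{h}_\mu^2 \cdot \int \mu(\xi) d\xi = \normof{h}_\mu^2$, which after rescaling by $\eps$ gives a uniform ceiling of order $\alpha^6/T$. The width $T/\alpha^6$ of the boundary strip is chosen precisely so the two bounds glue together at the transition.

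The main obstacle is the interior bound. A naive pointwise bound for arbitrary $\mu$ degrades rapidly near the endpoints, so one must genuinely extract the $1/\min(t, T-t)$ scaling. This requires either a direct appeal to a sharp trigonometric Remez inequality (in the style of Erd\'elyi and Nazarov) together with a careful decomposition of $f$ into a low-frequency part captured by the effective bandwidth and a high-frequency tail killed by regularization, or reproducing the explicit Christoffel-function computation of \cite{avron2019universal} that bounds the concentration of $|f|^2$ on sub-intervals of $[0,T]$. Either route hinges on the threshold $\alpha \geq c\, s_{\mu,\eps}$, since $s_{\mu,\eps}$ sets the effective bandwidth above which the ridge term $\eps \normof{h}_\mu^2$ is strong enough to suppress pointwise spikes; below this scale no universal dominating function can exist.
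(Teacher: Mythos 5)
First, note that the paper does not prove this statement at all: it is imported verbatim as Theorem 17 of \cite{avron2019universal} (hence the bracketed attribution in the claim environment), so there is no in-paper argument to compare yours against. Judged on its own merits as a reconstruction of that external proof, your sketch has the right high-level shape (interior Remez/Christoffel-type bound plus an endpoint cap) but contains a concrete gap.

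The gap is in the boundary regime. Cauchy--Schwarz against the probability measure $\mu(\xi)d\xi$ does give $|f(t)|^2 \le \normof{h}_\mu^2$, but after dividing by $T(\normof{f}_T^2 + \eps\normof{h}_\mu^2)$ this yields the ceiling $\tau_{\cF_\mu^*,\eps}(t) \le \frac{1}{\eps T}$, not $\frac{\alpha^6}{T}$. These are incomparable under the hypothesis $\alpha \ge c\, s_{\mu,\eps}$: you would need $s_{\mu,\eps} \gtrsim \eps^{-1/6}$, whereas the statistical dimension generally grows only (sub)logarithmically in $1/\eps$ --- e.g.\ for the RBF kernel $s_{\mu,\eps} = O(\sigma^2 T\sqrt{\log(1/\eps)} + \log(1/\eps))$, and for a near-degenerate kernel (a Gaussian $\mu$ with vanishing lengthscale, so that $\cK_\mu$ is essentially rank one) $s_{\mu,\eps} = O(1)$ for every $\eps$. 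So $\frac{1}{\eps T}$ can exceed $\frac{(c\, s_{\mu,\eps})^6}{T}$ by an arbitrarily large factor, and your two bounds do not glue at $t = T/\alpha^6$. Establishing the $\mathrm{poly}(s_{\mu,\eps})/T$ endpoint cap genuinely requires showing that eigendirections of $\cK_\mu$ with eigenvalue below $\eps$ cannot concentrate at the endpoints --- an endpoint Markov-brothers-type bound for effectively $s_{\mu,\eps}$-dimensional families, not a consequence of regularization alone. Relatedly, your interior step is a pointer rather than an argument: the assertion that ``Fourier modes outside this band are suppressed by the ridge term'' is precisely what must be proved, and in \cite{avron2019universal} it occupies the bulk of the proof (a reduction showing that the range of $\cF_\mu^*$ is approximated, up to the $\eps$-ridge, by the span of $\tilde O(s_{\mu,\eps})$ complex exponentials, after which the $\alpha/\min(t,T-t)$ leverage bound for sparse exponential sums from \cite{ChenKanePrice:2016} applies). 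As written, the proposal would not become a proof without supplying both of these pieces.
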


We then state a known operator subsampling result from \cite{avron2019universal} which is based on the ridge leverage scores of \defref{ridgeScores}.
The proof of this result adapts a bound on sums of random operators by \cite{Minsker:2017}, and uses the upper bound of \claimref{lev_score_ub}.
A similar result is proven in \cite{Bach:2017}.
\begin{lemma}[Lemma 43 in \cite{avron2019universal}]
\label{lem:op_concentration}
	Consider a bounded linear operator $\cA: \cH \rightarrow L_2(T)$.
	Let $\tilde{\tau}_{\cA,\eps}(t)$ be a function with $\tilde{\tau}_{\cA,\eps}(t)\geq {\tau}_{\cA,\eps}(t)$ for all $t\in [0,T]$ and let $\tilde{s}_{\cA,\eps} = \int_0^T \tilde{\tau}_{\cA,\eps}(t)dt$. Let $n = c \cdot \Delta^{-2}\tilde{s}_{\cA,\eps} \log(\tilde{s}_{\cA,\eps}/\delta)$ for sufficiently large fixed constant $c$ and select $t_1, \ldots, t_n$ by drawing each randomly from $[0,T]$ with probability proportional to $\tilde{\tau}_{\cA,\eps}(t)$. For $j\in 1, \ldots, s$, let $w_j = \sqrt{\frac{\tilde{s}_{\cA,\eps}}{nT\cdot\tilde{\tau}_{\cA,\eps}(t_j)}}$. 
	Let $\bA: \cH \rightarrow \bbC^n$ be the operator defined by $[\bA g]_j = [\cA g](t_j) \cdot w_j$. 
	With probability $(1-\delta)$,
	\begin{align*}
	(1-\Delta)(\cG + \eps \cI_\cH) \preceq \bA^*\bA + \eps \cI_\cH \preceq (1+\Delta)(\cG + \eps \cI_\cH).
	\end{align*}
\end{lemma}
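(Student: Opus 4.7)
The plan is to reduce the two-sided sandwich to a concentration inequality for a sum of independent, positive semidefinite, bounded random operators on $\cH$, and then invoke an intrinsic-dimension operator Bernstein bound, such as the one in \cite{Minsker:2017}. Because $\cH$ may be infinite-dimensional, classical matrix Chernoff cannot be used off the shelf, so this last step will be the main technical obstacle.

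First I would write $\bA^*\bA$ as a sum of rank-one random operators. Let $a_t\in\cH$ denote the Riesz representer of the evaluation functional $g\mapsto [\cA g](t)$, and set $Y_j\defeq w_j^2\,a_{t_j} a_{t_j}^*$, so that $\bA^*\bA=\sum_{j=1}^n Y_j$. The choice $w_j^2=\tilde s_{\cA,\eps}/(nT\tilde\tau_{\cA,\eps}(t_j))$ against the sampling density $\tilde\tau_{\cA,\eps}(t)/\tilde s_{\cA,\eps}$, combined with the identity $\cG=\cA^*\cA=\tfrac{1}{T}\int_0^T a_t a_t^*\,dt$, yields $\E[Y_j]=\cG/n$ and hence $\E[\bA^*\bA]=\cG$. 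To absorb the $\eps\cI_\cH$ regularizer I would precondition, defining $Z_j\defeq(\cG+\eps\cI_\cH)^{-1/2}Y_j(\cG+\eps\cI_\cH)^{-1/2}$; after conjugating the eventual deviation bound back by $(\cG+\eps\cI_\cH)^{1/2}$ and adding $\eps(\cG+\eps\cI_\cH)^{-1}$ to both sides, the statement reduces to showing $\big\|\sum_j Z_j - \E\sum_j Z_j\big\|\le\Delta$.

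The two inputs needed by operator Bernstein fall out of the definition of the ridge leverage function. A direct variational calculation gives
\begin{align*}
\|Z_j\|\;=\;w_j^2\sup_{h\in\cH}\frac{|[\cA h](t_j)|^2}{\|\cA h\|_T^2+\eps\|h\|_\cH^2}\;=\;w_j^2\cdot T\cdot\tau_{\cA,\eps}(t_j)\;\le\;\frac{\tilde s_{\cA,\eps}}{n},
\end{align*}
where the last inequality uses $\tilde\tau_{\cA,\eps}\ge\tau_{\cA,\eps}$. Since each $Z_j\succeq 0$, we have $Z_j^2\preceq\|Z_j\|\cdot Z_j\preceq(\tilde s_{\cA,\eps}/n)Z_j$, giving the matrix-variance bound $\big\|\E\sum_j Z_j^2\big\|\le\tilde s_{\cA,\eps}/n$. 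Feeding these into operator Bernstein and taking $n=c\Delta^{-2}\tilde s_{\cA,\eps}\log(\tilde s_{\cA,\eps}/\delta)$ for a sufficiently large universal constant $c$ produces the required deviation with probability $1-\delta$.

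The main obstacle is that last Bernstein step, since classical matrix Chernoff pays a logarithm of the ambient dimension, which is infinite here. What rescues the argument is that $\E\sum_j Z_j=(\cG+\eps\cI_\cH)^{-1/2}\cG(\cG+\eps\cI_\cH)^{-1/2}$ is trace-class with $\trace(\E\sum_j Z_j)=\trace(\cG(\cG+\eps\cI_\cH)^{-1})$, and the standard SVD identity relating $\cA^*\cA$ and $\cA\cA^*$ equates this trace with the statistical dimension $s_{\cA,\eps}\le\tilde s_{\cA,\eps}$. Minsker's intrinsic-dimension bound then replaces the ambient dimension by exactly this quantity, producing the $\log(\tilde s_{\cA,\eps}/\delta)$ factor in the sample complexity. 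Verifying trace-classness and passing from finite-rank truncations to the full operator is the only place where genuine operator-theoretic care is required.
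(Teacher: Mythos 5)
Your proposal is correct and follows essentially the same route as the proof the paper points to (Lemma 43 of \cite{avron2019universal}): decompose $\bA^*\bA$ into independent rank-one operators, precondition by $(\cG+\eps\cI_\cH)^{-\nicefrac12}$ so the leverage-score upper bound controls each summand's norm by $\tilde s_{\cA,\eps}/n$, and invoke Minsker's intrinsic-dimension operator Bernstein inequality with intrinsic dimension $\trace\bigl(\cG(\cG+\eps\cI_\cH)^{-1}\bigr)\le\tilde s_{\cA,\eps}$ to avoid the infinite ambient dimension. No gaps; the details you flag (trace-classness, the identity $\int_0^T\tau_{\cA,\eps}(t)\,dt=s_{\cA,\eps}$) are exactly the ones handled in the cited source.
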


\subsubsection{Concentration of Concatenated Fourier Operators}

With \lemmaref{op_concentration} in place, our goal in this section is prove a specific approximation result for randomly subsampling rows from the  the \emph{concatenation} of two inverse Fourier transform operators, $\cF_{\mu_1}^*$ and $\cF_{\mu_2}^*$. Specifically, let $\oplus$ denote the standard direct sum operation between Hilbert spaces. I.e. $[\alpha,\beta]\in \cH_1 \oplus \cH_2$ if $\alpha \in \cH_1$ and $\beta \in \cH_1$. For finitely bounded PDFs $\mu_1$ and $\mu_2$ the concatenated operator $\cF_{\mu_1,\mu_2}: L_2(\mu_1)\oplus L_2(\mu_2) \rightarrow L_2(T)$ is defined as:
\begin{align*}
\cF_{\mu_1,\mu_2}^*[\alpha,\beta]= \cF_{\mu_1}^*\alpha + \cF_{\mu_2}^*\beta.
\end{align*}
Note that the adjoint of $\cF_{\mu_1,\mu_2}^*$ is $\cF_{\mu_1,\mu_2} f = (\cF_{\mu_1} f, \cF_{\mu_2} f)$. 

Our goal is to approximate $\cF_{\mu_1,\mu_2}^*$ by an operator with a finite number of rows. Such an approximation could be obtained directly from \lemmaref{op_concentration}. However, applying that result requires \emph{an upper bound on the ridge leverage scores (\defref{ridgeScores}) of $\cF_{\mu_1,\mu_2}^*$}. Our first technical result of this section is to show that such an upper bound can be obtained using the universal sampling distribution of \defref{univ_dist}. We prove:

\begin{lemma}
	\label{lem:pairwise_lev_score_bound}
	For any bounded PDFs $\mu_1,\mu_2$ on $\bbR$ let, $\cA = \cF_{\mu_1,\mu_2}^*$ where $\cF_{\mu_1,\mu_2}^*$ is a concatenated inverse Fourier transform operator as defined above for any $\eps > 0$,
	\begin{align*}
		\tau_{\cA,\eps}(t)  \leq \tilde\tau_\alpha(t)
	\end{align*}
	as long as $\alpha \geq c \cdot \max\left[s_{\mu_1,\eps},s_{\mu_2,\eps}\right]$ for some fixed constant $c$.
\end{lemma}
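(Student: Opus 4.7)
The plan is to reduce the leverage-score computation for the concatenated operator $\cA = \cF_{\mu_1,\mu_2}^*$ to a leverage-score computation for a single (rescaled) inverse Fourier operator, at which point \claimref{lev_score_ub} applies directly. The key observation driving the reduction is that ridge leverage scores depend on an operator $\cA$ only through the self-adjoint operator $\cA\cA^*$.

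First I would compute $\cA\cA^*$ for the concatenation. Since $\cA^*$ sends $f \mapsto (\cF_{\mu_1} f, \cF_{\mu_2} f)$, we get
\begin{align*}
\cA\cA^* \;=\; \cF_{\mu_1}^* \cF_{\mu_1} + \cF_{\mu_2}^* \cF_{\mu_2} \;=\; \cK_{\mu_1} + \cK_{\mu_2} \;=\; \cK_{\mu_1 + \mu_2},
\end{align*}
the kernel operator of the (unnormalized) sum measure. Taking the minimum-norm preimage in \defref{ridgeScores} rewrites $\tau_{\cA,\eps}(t)$ as $\tfrac{1}{T} \max_{f} |f(t)|^2 / (\|f\|_T^2 + \eps \langle f, (\cA\cA^*)^{-1} f\rangle_T)$, so the leverage function of the concatenation coincides with the leverage function of the ordinary inverse Fourier operator $\cF_{\mu_1+\mu_2}^*$ associated with the sum measure. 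Note that a direct approach via the triangle inequality $|a+b|^2 \leq 2|a|^2+2|b|^2$ on the numerator fails, because the denominator $\|\cF_{\mu_1}^*\alpha+\cF_{\mu_2}^*\beta\|_T^2$ can exhibit arbitrary cancellation while the ridge penalty stays small.

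Next, rescale to a probability density by letting $\nu \defeq \tfrac{1}{2}(\mu_1 + \mu_2)$. The identities $\cF_{\mu_1+\mu_2}^* g = 2 \cF_\nu^* g$ and $\|g\|_{\mu_1+\mu_2}^2 = 2 \|g\|_\nu^2$ substituted into the leverage-score definition give
\begin{align*}
\tau_{\cA,\eps}(t) \;=\; \tau_{\cF_{\mu_1+\mu_2}^*, \eps}(t) \;=\; \tau_{\cF_\nu^*, \eps/2}(t).
\end{align*}
Since $\nu$ is a PDF, \claimref{lev_score_ub} yields $\tau_{\cF_\nu^*, \eps/2}(t) \leq \tilde\tau_\alpha(t)$ provided $\alpha \geq c \cdot s_{\nu, \eps/2}$.

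The main obstacle is then a subadditivity bound relating $s_{\nu,\eps/2}$ back to $s_{\mu_1,\eps}$ and $s_{\mu_2,\eps}$. From $\cK_\nu = \tfrac{1}{2} \cK_{\mu_1+\mu_2}$ one computes
\begin{align*}
s_{\nu, \eps/2} \;=\; \trace\!\bigl((\cK_{\mu_1}+\cK_{\mu_2})(\cK_{\mu_1}+\cK_{\mu_2}+\eps \cI_T)^{-1}\bigr).
\end{align*}
I would then split the numerator and use the operator inequality $(\cK_{\mu_1}+\cK_{\mu_2}+\eps \cI_T)^{-1} \preceq (\cK_{\mu_j}+\eps \cI_T)^{-1}$ combined with the monotonicity of $\cM \mapsto \trace(\cK_{\mu_j}^{1/2}\cM\cK_{\mu_j}^{1/2})$ under semidefinite order to obtain $s_{\nu,\eps/2} \leq s_{\mu_1,\eps} + s_{\mu_2,\eps} \leq 2 \max(s_{\mu_1,\eps}, s_{\mu_2,\eps})$. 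Absorbing the factor of $2$ into the constant $c$ yields the lemma.
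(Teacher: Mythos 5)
Your proposal is correct and follows essentially the same route as the paper: both reduce the concatenated operator to the single averaged density $\bar\mu = \tfrac{1}{2}(\mu_1+\mu_2)$, invoke \claimref{lev_score_ub} for that density, and finish with subadditivity of the statistical dimension (which the paper imports as Lemma 51 of \cite{avron2019universal} and you reprove via the trace/semidefinite-order argument). The only substantive difference is how the reduction is justified: you appeal to the abstract fact that ridge leverage scores depend on $\cA$ only through $\cA\cA^*$ --- which in the infinite-dimensional setting should be stated via the minimum-norm-preimage identity with the pseudo-inverse $(\cA\cA^*)^+$ rather than a literal inverse --- whereas the paper makes the same step concrete by exhibiting the preimage $w(\xi) = (\mu_1(\xi)+\mu_2(\xi))^+\left(\mu_1(\xi)\alpha^*(\xi)+\mu_2(\xi)\beta^*(\xi)\right)$ and bounding $\|w\|_{\bar\mu}^2 \leq \|\alpha^*\|_{\mu_1}^2 + \|\beta^*\|_{\mu_2}^2$ pointwise, which sidesteps any invertibility or range-closure issues.
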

\begin{proof}
	Let $\bar{\mu} = \frac{\mu_1+\mu_2}{2}$ and let $\bar{\cA} = 2\cF_{\bar{\mu}}^*$. We establish the lemma by proving
	\begin{align}
	\label{eq:bound_by_sum}
		\tau_{\cA,\eps}(t) \leq \tau_{\bar{\cA},\eps}(t) 
	\end{align}
	Once we have this bound, we can apply \claimref{lev_score_ub} to observe that $\tau_{\bar{\cA},\eps}(t) \leq \tilde\tau_\alpha(t)$ as long as long as $\alpha \geq c \cdot s_{\bar{\mu},\eps}$. Finally, from Lemma 51 in \cite{avron2019universal}, we have that $s_{\bar{\mu},\eps} \leq 2\max\left[s_{\mu_1,\eps},s_{\mu_2,\eps}\right]$, which gives the lemma because $\tilde\tau_\alpha(t)$ is strictly increasing with $\alpha$.
	
	So, we are left to prove \eqref{eq:bound_by_sum}. Referring to  \defref{ridgeScores} and noting that $\normof{[\alpha,\beta]}_{\cH_1 \oplus \cH_2}^2 = \normof{\alpha}_{\cH_1}^2 + \normof{\beta}_{\cH_2}^2$ , we can do so by upper bounding for all $t \in [0,T]$:
	\begin{align}
	\label{eq:leverage_def_ds}
	\frac{1}{T} \cdot \max_{\{[\alpha,\beta]\in L_2(\mu_1)\oplus L_2(\mu_2):\, \normof{\alpha}_{\mu_1}^2 + \normof{\beta}_{\mu_2}^2 > 0\}} \frac{\left|[\cF_{\mu_1,\mu_2}^* [\alpha,\beta]](t) \right|^2 }{\normof{\cF_{\mu_1,\mu_2}[\alpha,\beta]}_T^2 + \eps \normof{\alpha}_{\mu_1}^2 + \eps \normof{\beta}_{\mu_2}^2}.
	\end{align}
	
	For any particular $t\in [0,T]$, let $\alpha^* \in L_2(\mu_1)$ and $\beta^* \in L_2(\mu_1)$ be the maximizers of \eqref{eq:leverage_def_ds}.
	 We are going to define a function $w$ to satisfy $\bar{\cA} w = \cF_{\mu_1,\mu_2}^*[\alpha^*, \beta^*]$.  
	In particular, we can set 
	\begin{align*}
	w(\xi) = (\mu_1(\xi) + \mu_2(\xi) )^+\left(\mu_1(\xi)\alpha^*(\xi) + \mu_2(\xi)\beta^*(\xi)\right)
	\end{align*}
	where for a $s\in \bbR$, $s^+$ evaluates to $0$ when $s = 0$ and $1/s$ otherwise.
	We have that \eqref{eq:leverage_def_ds} is equal to:
	\begin{align}
		\label{eq:leverage_def_ds_2}
	\frac{1}{T} \cdot  \frac{\left|[\bar{\cA} w](t) \right|^2 }{\normof{\bar{\cA} w}_T^2 + \eps \normof{\alpha^*}_{\mu_1}^2 + \eps \normof{\beta^*}_{\mu_2}^2}.
	\end{align}
	
	Next we bound $\normof{w}_{\bar{\mu}}^2 = \int_{\xi \in \bbR} w(\xi)^2 \bar{\mu}(\xi) d\xi$. We have that for all $\xi$,
	\begin{align*}
	w(\xi)^2 \bar{\mu} &=  \frac{1}{2}(\mu_1(\xi) + \mu_2(\xi) )^+ \cdot \left(\mu_1(\xi)\alpha^*(\xi) + \mu_2(\xi)\beta^*(\xi)\right)^2\\
	&\leq (\mu_1(\xi) + \mu_2(\xi) )^+ \cdot \left(\mu_1(\xi)^2\alpha^*(\xi)^2 + \mu_2(\xi)^2\beta^*(\xi)^2\right)\\
	&\leq \mu_1(\xi)\alpha^*(\xi)^2 + \mu_2(\xi)\beta^*(\xi)^2.
	\end{align*}
	It follows that $\int_{\xi \in \bbR} w(\xi)^2 \bar{\mu}(\xi) d\xi \leq \int_{\xi \in \bbR} \alpha^*(\xi)^2 \mu_1(\xi) d\xi + \int_{\xi \in \bbR} \beta^*(\xi)^2 \mu_2(\xi) d\xi = \normof{\alpha^*}_{\mu_1}^2 + \normof{\beta^*}_{\mu_2}^2$.
	Substituting into \eqref{eq:leverage_def_ds_2}, we actually have that \eqref{eq:leverage_def_ds} can be upper bounded by
	\begin{align*}
	\frac{1}{T} \cdot  \frac{\left|[\bar{\cA} w](t) \right|^2 }{\normof{\bar{\cA} w}_T^2 + \eps \normof{w}_{\bar{\mu}}^2}.
	\end{align*}
	This quantity is of course only small than $\tau_{\bar{\cA},\eps}(t)$, which completes the proof of \eqref{eq:bound_by_sum}.
\end{proof}

The following theorem is a direct corollary of \lemmaref{op_concentration} and  \lemmaref{pairwise_lev_score_bound}.

\begin{theorem}
	\label{thm:gram-operator-spectral}
	Fix \(\Delta > 0\) and \(\delta > 0\).
	Let \(\mu_1, \mu_2\) be bounded PDFs. Let $s_{max} = \max\left[s_{\mu_1,\eps}, s_{\mu_2,\eps}\right]$. Let $\alpha = c_0 s_{max}$ and $n = c_1\Delta^{-2} s_{max}\log (s_{max})\log(s_{max}/\delta)$ for fixed universal constants $c_0,c_1$. Suppose $n$ time samples $t_1,\ldots, t_n \in [0,T]$ are sampled with probability proportional to $\tilde{\tau}_{\alpha}(t)$ and \(\mF_{\mu_1}^*\) and \(\mF_{\mu_2}^*\) be the sampled versions of \(\cF_{\mu_1}^*\) and \(\cF_{\mu_2}^*\) satisfying for $j = 1,\ldots, n$:
	\begin{align*}
	[\mF_{\mu_p}^*g]_j = w_j\cdot\int_\bbR g(\xi) e^{2 \pi i \xi t_j} \mu_p(\xi) d\xi,
	\end{align*}
	where  $w_j = \sqrt{\frac{\int_0^T\tilde{\tau}_{\alpha}(t)dt}{sT\cdot\tilde{\tau}_{\alpha}(t_j)}}$.
	Then with probability $(1-\delta)$,
		\[
	(1-\Delta) (\cG + \eps\cI) \preceq \tilde \cG + \eps\cI \preceq (1+\Delta) (\cG + \eps\cI)
	\]
	where $\cG = \cF_{\mu_1,\mu_2}\cF_{\mu_1,\mu_2}^*$ and $\bar{\cG} = [\mF_{\mu_2};\mF_{\mu_1}] [\mF_{\mu_2}^*,\mF_{\mu_1}^*]$. 
	Here $[\mF_{\mu_2};\mF_{\mu_1}]: \bbC^s \rightarrow L_2(\mu_1) \oplus L_2(\mu_2)$ 
	is the natural concatenation of $\mF_{\mu_2}$ and $\mF_{\mu_1}$, and $[\mF_{\mu_2}^*,\mF_{\mu_1}^*]$ is the concatenation of $\mF_{\mu_2}^*$ and $\mF_{\mu_1}^*$.
	\begin{proof}
		By \lemmaref{pairwise_lev_score_bound} $\tilde{\tau}_{\alpha}(t)$ strictly upper bounds the $\eps$-ridge leverage scores of $\cF_{\mu_1,\mu_2}^*$ as long as $\alpha$ is set as in the theorem statement. Moreover, referring to \defref{univ_dist}, $\int_0^T \tilde{\tau}_{\alpha}(t) dt \leq O(\alpha \log \alpha)$, so the number of samples $n$ in the theorem is sufficiently large to directly apply \lemmaref{op_concentration} to the bounded linear operator $\cF_{\mu_1,\mu_2}^*$.
	\end{proof}
	
\end{theorem}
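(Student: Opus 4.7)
The plan is to recognize this theorem as an essentially immediate corollary of \lemmaref{op_concentration} applied to the concatenated operator $\cA = \cF_{\mu_1,\mu_2}^*: L_2(\mu_1)\oplus L_2(\mu_2) \to L_2(T)$, with the pointwise leverage score upper bound furnished by \lemmaref{pairwise_lev_score_bound}. The key identification is that $\cA^*\cA = \cF_{\mu_1,\mu_2}\cF_{\mu_1,\mu_2}^* = \cG$, so the Gram operator whose spectral approximation we want is exactly the operator covered by the conclusion of \lemmaref{op_concentration}. Correspondingly, the sampled operator produced by that lemma will be a sampled version of $\cA$, which we will then identify with the concatenation $[\mF_{\mu_1}^*, \mF_{\mu_2}^*]$ appearing in the theorem statement.

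Concretely, I would first invoke \lemmaref{pairwise_lev_score_bound} with $\alpha = c_0 s_{max}$ for a sufficiently large absolute constant $c_0$ (at least the constant $c$ in that lemma), yielding $\tilde\tau_\alpha(t) \geq \tau_{\cA,\eps}(t)$ for every $t \in [0,T]$. Next I would use \defref{univ_dist} to bound the total mass $\tilde s_{\cA,\eps} = \int_0^T \tilde\tau_\alpha(t)\,dt = O(\alpha\log\alpha) = O(s_{max}\log s_{max})$. This makes the sample count $n = \Omega(\Delta^{-2} s_{max} \log s_{max} \log(s_{max}/\delta))$ in the theorem exactly the sample count required by \lemmaref{op_concentration}. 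Applying that lemma then produces the desired sandwich $(1-\Delta)(\cG + \eps\cI) \preceq \bA^*\bA + \eps\cI \preceq (1+\Delta)(\cG + \eps\cI)$ with probability $(1-\delta)$, where $\bA$ is the sampled version of $\cA$ built with weights $w_j = \sqrt{\tilde s_{\cA,\eps}/(nT\tilde\tau_\alpha(t_j))}$.

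The only bookkeeping step is identifying $\bA$ with the concrete concatenation in the theorem statement. For any $(f,g)\in L_2(\mu_1)\oplus L_2(\mu_2)$, the $j$-th entry of $\bA(f,g)$ is $w_j \cdot [\cF_{\mu_1}^* f + \cF_{\mu_2}^* g](t_j) = [\mF_{\mu_1}^* f]_j + [\mF_{\mu_2}^* g]_j$, so $\bA$ is exactly the horizontal concatenation $[\mF_{\mu_1}^*, \mF_{\mu_2}^*]$, whose adjoint is the vertical concatenation $[\mF_{\mu_1}; \mF_{\mu_2}]$. Hence $\bA^*\bA$ coincides with the operator the theorem calls $\tilde\cG$, and the spectral inequality transfers verbatim.

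I do not anticipate a genuine obstacle: the non-trivial content, showing that one fixed distribution $\tilde\tau_\alpha$ dominates the leverage scores of the \emph{concatenated} operator uniformly in $\mu_1,\mu_2$, has already been absorbed into \lemmaref{pairwise_lev_score_bound}, and the generic operator Bernstein-type argument is absorbed into \lemmaref{op_concentration}. The theorem is pure assembly of these two pieces together with the mass bound on $\tilde\tau_\alpha$.
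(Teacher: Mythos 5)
Your proposal is correct and follows exactly the paper's own argument: invoke \lemmaref{pairwise_lev_score_bound} to get the leverage-score upper bound for the concatenated operator $\cF_{\mu_1,\mu_2}^*$, use the mass bound $\int_0^T \tilde\tau_\alpha(t)\,dt = O(\alpha\log\alpha)$ from \defref{univ_dist} to verify the sample count, and apply \lemmaref{op_concentration}. Your explicit identification of the sampled operator $\bA$ with the concatenation $[\mF_{\mu_1}^*, \mF_{\mu_2}^*]$ is a bookkeeping step the paper leaves implicit, but the route is the same.
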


\subsubsection{Final Result for Linear Operators}

\begin{reptheorem}{multiple-priors}
Let \(\tilde\cU = \{\mu_1,\ldots,\mu_Q\}\) be a finite set of scaled PDFs.
Let \(s_{max,\eps}\) be the maximum of the PDFs' statistical dimensions: \(s_\eps = \max_j s_{\mu_j, \eps}\).
Let \(t_1,\ldots,t_n\) be iid samples from the universal sampling distribution, and define \(\mF^*\) accordingly.
Let \(\tilde\mu,\tilde g\) optimally solve the time-discretized problem:
\[
	\tilde\mu, \tilde g
	\defeq
	\argmin_{\mu\in\tilde\cU, g\in L_2(\mu)}
	\normof{\mF_{\tilde\mu}^* g - \bar\vy}_2^2 + \eps\normof{g}_{\mu}^2
\]
If \(n = O(s_{\eps} \log(\frac{s_\eps ~ Q}{\delta}))\), then with probability \(1-\delta\), we have
\[
	\normof{\cF_{\tilde\mu}^* \tilde g - \bar y}_T^2 + \eps\normof{\tilde g}_{\tilde\mu}^2
	\leq
	(9+\nicefrac{8}{\delta}) \cdot
	\argmin_{\mu\in\tilde\cU, g\in L_2(\mu)}
	\normof{\cF_{\mu}^* g - \bar y}_T^2 + \eps\normof{g}_{\tilde\mu}^2
\]
\end{reptheorem}

\begin{proof}
Let \(\hat\mu\) and \(\hat \vx\) be the true minimizers for the full optimization problem:
\[
	\hat\mu, \hat \vx \defeq \argmin_{\mu\in\tilde\cU, g\in L_2(\mu)} \normof{\cF_\mu^* g - \bar y}_T^2 + \eps\normof{g}_\mu^2
\]
By the triangle inequality, and the inverse triangle inequality, we have for any \(\mu\) and any \(g\in L_2(\mu)\),
\[
	\normof{\mF_\mu^* g - \bar\vy}_2^2 \in \normof{\mF_{\hat\mu}^* \hat g - \mF_\mu^* g}_2^2 \pm \normof{\mF_{\hat\mu}^* \hat g - \bar\vy}_2^2
\]
Note from \cite{avron2019universal} that \(\E[\normof{\mF_{\hat\mu}^* \hat g - \bar\vy}_2^2] = \normof{\cF_{\hat\mu}^* \hat g - \bar y}_T^2\).
Hence, by Markov's inequality, we have
\[
	\normof{\mF_{\hat\mu}^* \hat g - \bar\vy}_2^2 + \eps\normof{\hat g}_{\hat\mu}^2 \leq \frac2\delta \left(\normof{\cF_{\hat\mu}^* \hat g - \bar y}_T^2 + \eps\normof{\hat g}_{\hat\mu}^2\right)
\]
with probability \(1-\frac\delta2\).

Next, note that for any \(\mu_j,\mu_k\in\tilde\cU\) we have
\[
	\normof{\mF_{\mu_j}^* g + \mF_{\mu_k}^* h}_2^2 + \eps\normof{g}_{\mu_j}^2 + \eps\normof{h}_{\mu_k}^2
	\in (1\pm\Delta)
	\left(
		\normof{\cF_{\mu_j}^* g + \cF_{\mu_k}^* h}_T^2 + \eps\normof{g}_{\mu_j}^2 + \eps\normof{h}_{\mu_k}^2
	\right)
\] for all \(g \in L_2(\mu_j), h\in L_2(\mu_k)\) with probability \(1-\frac\delta2\).
This follows directly from \theoremref{gram-operator-spectral}.

\noindent Further, by the triangle inequality, we have for any \(\mu\in\tilde\cU,g\in L_2(\mu)\)
\[
	\normof{\cF_{\hat\mu}^*\hat g - \cF_\mu^* g}_T^2
	\leq
	\normof{\cF_{\hat\mu}^*\hat g - \bar y}_T^2 + \normof{\cF_\mu\vx^* - \bar y}_T^2
\]
Putting these last two inequalities together, we find
\[
	\normof{\mF_{\hat\mu}^* \hat g + \mF_{\mu}^* g}_2^2 + \eps\normof{\hat g}_{\hat\mu}^2 + \eps\normof{g}_{\mu}^2
	\in	(1\pm\Delta)
	\left(
		\normof{\cF_{\hat\mu}^*\hat g - \bar y}_T^2 + \eps\normof{\hat g}_{\hat\mu}^2 + \normof{\cF_{\mu}^* g - \bar y}_T^2 + \eps\normof{g}_{\mu}^2
	\right)
\]
Then, using this bound, alongside the Markov bound and the original triangle inequality, we find
\begin{align*}
	\normof{\mF_{\mu}^* g - \bar\vy}_2^2 + \eps\normof{g}_{\mu}^2
	&\in \left(\normof{\mF_{\hat\mu}^* \hat g - \mF_\mu^* g}_2^2 + \eps\normof{\hat g}_{\hat\mu}^2 + \eps\normof{g}_\mu^2\right) \pm \left(\normof{\mF_{\hat\mu}^* \hat g - \bar\vy}_2^2 + \eps\normof{\hat g}_{\hat\mu}^2\right) \\
	&\in \left(\normof{\mF_{\hat\mu}^* \hat g - \mF_\mu^* g}_2^2 + \eps\normof{\hat g}_{\hat\mu}^2 + \eps\normof{g}_\mu^2\right) \pm \frac2\delta \left(\normof{\cF_{\hat\mu}^* \hat g - \bar y}_T^2 + \eps\normof{\hat g}_{\hat\mu}^2\right) \\
	&\in (1\pm\Delta)\left(\normof{\cF_{\hat\mu}^*\hat g - \bar y}_T^2 + \eps\normof{\hat g}_{\hat\mu}^2 + \normof{\cF_{\mu}^* g - \bar y}_T^2 + \eps\normof{g}_{\mu}^2\right) \pm \frac2\delta \left(\normof{\cF_{\hat\mu}^* \hat g - \bar y}_T^2 + \eps\normof{\hat g}_{\hat\mu}^2\right) \\
	&\in (1\pm\Delta)\left(\normof{\cF_{\mu}^* g - \bar y}_T^2 + \eps\normof{g}_{\mu}^2\right) \pm \left(1+\Delta+\frac2\delta\right) \left(\normof{\cF_{\hat\mu}^* \hat g - \bar y}_T^2 + \eps\normof{\hat g}_{\hat\mu}^2\right)
\end{align*}
To simplify the notation a bit, let \(c_0\defeq \left(1+\Delta+\frac2\delta\right)\), \(\cL(\mu,g) \defeq \normof{\cF_{\mu}^* g - \bar y}_T^2 + \eps\normof{g}_{\mu}^2\), and \(L(\mu,g) \defeq \normof{\mF_{\mu}^* g - \bar\vy}_2^2 + \eps\normof{g}_{\mu}^2\).
Then the previous bound says
\[
	L(\mu,g) \in (1\pm\Delta)\cL(\mu,g) \pm c_0 \cL(\hat \mu, \hat g)
\]
Recall that this bound holds for any choice of \(\mu\in\tilde\cU\) and any \(g\in L_2(\mu)\).
If we take \(\mu = \tilde\mu\) and \(g = \tilde g\), and rearrange terms, we find
\begin{align*}
	\cL(\tilde \mu, \tilde g)
	&\leq \frac{1}{1-\Delta} L(\tilde \mu, \tilde g) + \frac{c_0}{1-\Delta} \cL(\hat \mu, \hat g) \\
	&\leq \frac{1}{1-\Delta} L(\hat \mu, \hat g) + \frac{c_0}{1-\Delta} \cL(\hat \mu, \hat g) \\
	&\leq \frac{1}{1-\Delta} \left((1+\Delta)\cL(\hat \mu, \hat g) + c_0\cL(\hat\mu, \hat g)\right) + \frac{c_0}{1-\Delta} \cL(\hat \mu, \hat g) \\
	&= \frac{1+\Delta+2c_0}{1-\Delta}\cL(\hat \mu, \hat g) \\
	&= \frac{1+\Delta+2(1+\Delta+\frac2\delta)}{1-\Delta}\cL(\hat \mu, \hat g) \\
	&= \frac{3+3\Delta+\frac4\delta}{1-\Delta}\cL(\hat \mu, \hat g)
\end{align*}
If we take \(\Delta=\frac12\), we complete the proof.
\end{proof}

\end{document}